\renewcommand{\tilde}{\widetilde}
\renewcommand{\hat}{\widehat}
\setlist[enumerate]{wide = 0pt, leftmargin=*}
\definecolor{DarkRed}{rgb}{0.5,0.1,0.1}
\definecolor{RURed}{rgb}{0.8,0.1,0.1}
\definecolor{DarkBlue}{rgb}{0.1,0.1,0.5}
\definecolor{ForestGreen}{rgb}{0.1333,0.5451,0.1333}
\definecolor{Red}{rgb}{0.9,0,0}
\crefname{property}{property}{Property}
\crefname{equation}{eq}{Eq}
\newtheorem{theorem}{Theorem}[section]
\newtheorem{lemma}{Lemma}[section]
\newtheorem*{claim*}{Claim}
\newtheorem*{proposition*}{Proposition}
\newtheorem*{lemma*}{Lemma}
\newtheorem*{corollary*}{Corollary}
\newtheorem*{remark*}{Remark}
\theoremstyle{definition}
\newtheorem{definition}{Definition}
\newtheorem*{problem*}{Problem}
\newtheorem{mdalg}{Algorithm}
\newtheorem{mdresult}{Result}
\newenvironment{tbox}{\begin{tcolorbox}[
		enlarge top by=5pt,
		enlarge bottom by=5pt,
		breakable,
		boxsep=0pt,
		left=4pt,
		right=4pt,
		top=10pt,
		arc=0pt,
		boxrule=1pt,toprule=1pt,
		colback=white
		]
	}
	{\end{tcolorbox}}
\newcommand{\dist}{\mathsf{dist}}
\newcommand{\cost}{\ensuremath{\textnormal{\textsf{cost}}}}
\newcommand{\Depth}{\mathsf{D}^\downarrow}
\title{Impossibility of Depth Reduction in Explainable Clustering}
 \author{Chengyuan Deng\\ Rutgers University\\\texttt{cd751@rutgers.edu} \and Surya Teja Gavva\\ Rutgers University\\\texttt{suryateja@math.rutgers.edu} \and Karthik C.\ S.\footnote{This work was supported  by the National Science Foundation under Grant CCF-2313372 and by a grant from the Simons Foundation, Grant
 Number 825876, Awardee Thu D. Nguyen.}\\ Rutgers University\\\texttt{karthik.cs@rutgers.edu} \and Parth Patel\\ Rutgers University\\\texttt{pp826@scarletmail.rutgers.edu} \and Adarsh Srinivasan\\ Rutgers University\\\texttt{adarsh.srinivasan@rutgers.edu}}
\date{}
\begin{document}

\maketitle

\begin{abstract}
	Over the last few years Explainable Clustering has gathered a lot of attention. Dasgupta et al.\ [ICML'20] initiated the study of explainable $k$-means and $k$-median clustering problems where the explanation is captured by a threshold decision tree which partitions the space at each node using axis parallel hyperplanes. 
	Recently, Laber et al.\ [Pattern Recognition'23] made a case to consider the depth of the decision tree as an additional complexity measure of interest. \vspace{0.1cm}

	In this work, we prove that even when the input points are in the Euclidean plane, then any depth reduction in the explanation incurs unbounded loss in the $k$-means and $k$-median cost. 
	Formally, we show that there exists a data set $X\subseteq \mathbb{R}^2$, for which there is a decision tree of depth $k-1$ whose $k$-means/$k$-median cost matches the optimal clustering cost of $X$, but every decision tree of depth less than $k-1$ has unbounded cost w.r.t. the optimal cost of clustering. We extend our results to the $k$-center objective as well, albeit with  weaker guarantees.

\end{abstract}

\clearpage

\section{Introduction}
\label{sec:intro}
Clustering is a fundamental task that needs to be carried out for numerous applications in the areas of data analysis and unsupervised learning. The input to clustering in these applications is typically 
a set of points in Euclidean space and the goal is to find a solution that minimizes a specified clustering objective. Arguably, the three most popular clustering objectives are  $k$-means, $k$-median, and $k$-center, and there is a long line
of theoretical research on designing algorithms for optimizing these objectives \cite{Vazirani}.

\begin{sloppypar}Explainable AI  has gained significant popularity in recent years~\cite{molnar2020interpretable, murdoch2019definitions} as it provides reliable and helpful insights about the model and its predictions, and more importantly allows human users to comprehend and trust the results and output created by machine learning algorithms.  Dasgupta et al.\ \cite{moshkovitz2020explainable}  introduced a notion of \emph{Explainable Clustering} for the $k$-means and $k$-median\footnote{Dasgupta et al.\ \cite{moshkovitz2020explainable}   introduced explainable clustering for the $k$-median objective in the $\ell_1$-metric, as it is easier to analyze the $k$-median objective in that metric space. Please see Section~\ref{sec:metric} for a small discussion.} objectives, which produces a threshold decision tree with $k$ leaves as the clustering result. That is, all cuts are axis-parallel at some thresholds (i.e. $x_r = \theta$, where $r$ is a certain coordinate and $\theta$ is a threshold value). The performance of explainable clustering is then evaluated by the \emph{price of explainability}, defined to be the ratio of the clustering objective cost   given by the decision tree to the optimal cost of the clustering. The initial attempt by~\cite{moshkovitz2020explainable} gave an upper bound of $O(k^2)$ (resp.\ $O(k)$) and a lower bound of $\Omega(k)$ 
	(resp.\ $\Omega(\log k)$) on the price of explainability of $k$-means clustering (resp.\ $k$-median clustering). \end{sloppypar}

Laber and Murtinho \cite{laber2021price} expanded the study of explainable clustering to the $k$-center objective, giving an upper bound of $O(\sqrt{d} k^{1-1/d})$ and a lower bound of $\Omega\left(\sqrt{d}\cdot \frac{k\sqrt{\ln \ln k}}{\ln^{1.5}k}\right)$ on the price of explainability for $k$-center clustering. 
Over the last two years, a sequence of works has almost closed the above gap on price of explainability for all three clustering objectives~\cite{gamlath2021nearly,MakarychevS21,esfandiari2022almost,charikar2022near,makarychev2023random,gupta2023price}. Additionally, other aspects and variants of explainable clustering have been explored in recent literature, some examples being, bicriteria approximability of the price of explainability \cite{FMR20,makarychev2022explainable}, computational complexity of finding good explanations \cite{laber2022computational}, providing explanations via oblique decision trees \cite{gabidolla2022optimal}, constructing a polyhedron around each cluster as an explanation \cite{lawless2022cluster},  removing outliers from clusters
for better explainability \cite{bandyapadhyay2022find}, and so on. 

Piltaver et al.\ \cite{PILTAVER2016333}  studied how tree structure parameters (such as the number of leaves, branching factor, tree depth) affect tree interpretability, and the authors  concluded that based on empirical data (from a survey with 98 questions answered by 69 respondents),
that the most important parameter is question depth (i.e., the length of the path that must be traversed from the root to a leaf to answer a question in a classification tree).
Building on this study, Laber, Murtinho, and Oliveira~\cite{laber2021shallow}  argued that in practice, an explanation via decision trees is useful only if the decision tree is of low depth (a.k.a.\ shallow decision trees).
Indeed, intuitively explaining leaves that are
far from the root involves many tests, which makes it harder to
grasp the model’s logic. In other words, short explanations are easy to understand and thus trust! 

The authors of \cite{laber2021shallow}  proposed an empirically-effective algorithm that leverages the price of the tree depth in the objective to return a desired cut. However, there has been no theoretical study in this direction. We thus raise the following question:

\begin{center}
	\emph{Can we always obtain short explanations without  compromising too much on the clustering quality?}
\end{center}

In order to answer the above question, we first introduce a notion called \emph{price of depth reduction}. Given an input point-set $X$, the \emph{price of depth reduction} to $h$, denoted $\Depth(X,h)$, is  the ratio of the clustering objective cost of the clustering given by the best decision tree of depth $h$  to the cost of the clustering given by the optimal decision tree.
Note that the depth of any decision tree with $k$ leaves is at least $\log_2 k$ and at most $k-1$. Therefore, we always have $\Depth(X,k-1)=1$.

One may wonder, if every point-set can be explained by  decision trees of low depth (say $O(\log k)$) while only paying a small price of depth reduction. In their seminal paper, Dasgupta et al.\ \cite{moshkovitz2020explainable} provided a point-set   in high dimensions,   which admits a decision tree of depth $k-1$ whose $k$-means/$k$-median/$k$-center cost is equal to the cost of an optimal clustering, but  the cost of every decision tree of depth $k-2$ for the point-set is arbitrarily high relative to the optimum.  Formally, they showed:

\begin{theorem}[Impossibility of Shallow Explanations in High Dimensions\footnote{In Section 3 in \cite{moshkovitz2020explainable}, the authors consider the point-set  $\{\vec{0},\vec{e}_1,\ldots, \vec{e}_d\}\subset \mathbb{R}^d$, but they might have instead considered the point-set $\{\vec{e}_1,-\vec{e}_1,\ldots, \vec{e}_d,-\vec{e}_d\}$, to obtain smaller value of $k$ w.r.t., $d$, as reflected in the theorem statement.}; Section 3 in \cite{moshkovitz2020explainable}]\label{thm:introhigh}
	The following holds for $k$-means, $k$-median, and $k$-center clustering objectives. 
	For every $k,d\in \mathbb{N}$, such that $d\ge k/2$, there is a point-set $X\in\mathbb{R}^d$, such that $\Depth(X,k-2)$ is unbounded. Moreover, the price of explainability of $X$ is 1.
\end{theorem}

The above unbounded inapproximability result stands in stark contrast to the literature on classical clustering, (for $k$-means, $k$-median, and $k$-center) for which we have constant factor polynomial time approximation algorithms \cite{Gonzalez85,HochbaumS86,AhmadianNSW20,Cohen-AddadEMN22}. One may attribute the above bad dataset to the \emph{Curse of Dimensionality} \cite{koppen2000curse,verleysen2005curse,taylor2019dynamic}. 

That said, while we know  high dimensional clustering is hard to approximate for the classical clustering problems \cite{FederG88,CK19, CKL21, CKL22}, the $k$-means and $k$-median objectives inapproximability is alleviated in fixed dimensions as they admit a PTAS \cite{CAKM16,FRS16a,KoR07,Cohen-Addad18,abs-1812-08664}. Even in the explainable clustering literature, we have algorithms which have lower price of explainability when the dataset is in fixed dimenesions  \cite{charikar2022near,esfandiari2022almost}.
However, our next result is that the impossibility of shallow explanations continues to hold for $k$-means and $k$-median objectives even in the plane.

\begin{theorem}[Impossibility of Shallow Explanations for $k$-median and $k$-means in the Plane; Informal version of \Cref{thm:2d-impossible}]\label{thm:introkmean}
	The following holds for $k$-means and $k$-median clustering objectives. 
	For every $k\in \mathbb{N}$, there is a point-set $X\in\mathbb{R}^2$, such that $\Depth(X,k-2)$ is unbounded. Moreover, the price of explainability of $X$ is 1.
\end{theorem}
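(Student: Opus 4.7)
The plan is to construct, for each $k\ge 2$, a point set $X = X_{k,\eps}\subset\mathbb{R}^2$ depending on a small scale parameter $\eps>0$, such that the optimal (unconstrained) $k$-means/$k$-median cost of $X$ is $\Theta(k\eps^2)$ and is matched by an explicit depth-$(k-1)$ caterpillar tree, while every depth-$(k-2)$ tree has cost $\Omega(1)$ with the hidden constant depending only on the fixed geometric scale of the construction (not on $\eps$); sending $\eps\to 0$ then drives $\Depth(X,k-2)$ to infinity and establishes the theorem. Concretely, I will use $k$ two-point clusters, each of ``radius'' $\eps$ around its center, placed on a bounded geometric configuration in the plane whose prototype is the four-cluster ``diamond'' gadget with centers at $(0,L/2), (L/2,0), (L,L/2), (L/2,L)$. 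The centers are chosen so that the only axis-aligned cuts that split no cluster's two points are the $O(k)$ peel cuts at the outer extremes of the configuration.

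For the upper bound (price of explainability $1$), I will exhibit the specific depth-$(k-1)$ caterpillar tree that peels one cluster at a time using a clean outer cut, leaving at each step a smaller instance of the same configuration; every cluster's two points land in its own leaf and so the total cost is exactly $2k\eps^2$, matching the unconstrained $k$-means/$k$-median optimum of $X$. The heart of the proof is then the lower bound for depth $k-2$. A binary tree with $k$ leaves and depth at most $k-2$ must contain an internal node whose two children are both leaves, so the induced partition is not a pure caterpillar peel of the configuration. I would argue, via case analysis on the axis-aligned cut at such a ``fork'', that the cut either (i) splits the two points of some cluster into different sibling leaves, sending one point to a distant center and incurring cost $\Omega(1)$, or (ii) places two distinct clusters into the same leaf, so that the points of one of them are assigned to the other's center (at distance $\Omega(1)$), again incurring cost $\Omega(1)$. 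Thus every depth-$(k-2)$ tree has cost $\Omega(1)$, and therefore $\Depth(X,k-2)=\Omega(1/(k\eps^2))\to\infty$ as $\eps\to 0$.

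The main obstacle is designing the configuration for arbitrary $k$ in only two dimensions. In high dimensions (\Cref{thm:introhigh}) each cluster uses a dedicated coordinate for separation, but with only two axes one cannot place $k$ \emph{single} points so that every axis-aligned cut is a $1$-vs-$(k-1)$ split beyond $k=5$ (a counting argument on how many distinct $x$- and $y$-coordinates are available forces coincident points for $k\ge 6$). I therefore expect to use multi-point clusters whose bounding boxes interlock -- for example, a recursive or nested family of diamond gadgets at geometrically decreasing scales, so that peeling the outermost diamond exposes the next -- and a careful inductive argument to verify that every depth-$(k-2)$ tree's fork incurs cost $\Omega(1)$. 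Getting this construction and its case analysis right is where essentially all of the technical content lies.
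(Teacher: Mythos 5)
There is a genuine gap, and it lies exactly where you place ``essentially all of the technical content'': the parameter regime you commit to (clusters of radius $\eps$ whose pairwise separations live at a fixed scale $L$, with the ratio blown up by letting $\eps\to 0$) cannot force depth $k-1$ once $k\ge 6$. For your plan to work, the point-set must have the property that \emph{every} axis-parallel cut that splits no cluster is a $1$-vs-$(k-1)$ peel: if some non-splitting, non-trivial cut splits the clusters into groups of sizes $a,b\ge 2$, then restricting the exact caterpillar to each side (and pruning trivial nodes) gives exact subtrees of depths at most $a-1$ and $b-1$, hence an exact tree of depth at most $\max(a,b)\le k-2$ and $\Depth(X,k-2)=1$. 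But with clusters of diameter $O(\eps)$, a cluster can only ``block'' a gap of width $O(\eps)$ in the $x$- or $y$-projection. Requiring all non-splitting vertical cuts to be peels forces the $x$-projections to form at most three blocks of sizes $(1,k-2,1)$ (or coarser), the middle block having width $O(k\eps)$; the same holds for $y$. Hence at least $k-4$ clusters have centers inside an $O(k\eps)\times O(k\eps)$ box, contradicting the fixed separation $\Omega(L)$ you need for the $\Omega(1)$ misclassification cost. Your proposed fix, nested diamond gadgets at geometrically decreasing scales, fails concretely: already for two nested diamonds ($k=8$), a vertical cut between the outer-left cluster and the inner-left cluster is non-splitting and yields a $2$-vs-$6$ split, after which a caterpillar on each side gives an \emph{exact} tree of depth $k-2$.

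The paper's construction escapes this obstruction by abandoning small-radius clusters altogether: each cluster consists of two points that straddle the \emph{entire} previously built configuration (alternately in $x$ and in $y$), so the clusters themselves are long and genuinely block all intermediate cuts, forcing the onion-peeling caterpillar by induction. The unbounded ratio is then engineered not by shrinking radii but by choosing geometrically increasing inter-cluster scales $h_i$ together with decreasing weights $w_i$ (equivalently, point multiplicities) so that all clusters have comparable optimal cost, while merging or splitting across clusters pays an extra factor $h_j^2/h_{j-1}^2$ that can be made larger than any polynomial in $|X|$. So your high-level outline (exact caterpillar of depth $k-1$, plus ``every shallow tree misclassifies and misclassification is expensive'') matches the paper, but the crucial missing idea is that in the plane the clusters must be spread-out, interlocking objects with weight/scale asymmetry; the radius-$\eps$, fixed-separation ansatz cannot be repaired.
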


We remark that while the construction of the dataset in \Cref{thm:introhigh} is quite simple, the dataset construction in the above theorem is quite involved and very subtle. It is inspired from what can be termed as ``blocking examples'', that are seen in   discrete geometry (for example, in the context of the conjectured extremal examples for the Pach-Tardos conjecture on axis parallel rectangles \cite{PT00}).

Our results for $k$-center in the plane are not as strong as the ones above for $k$-means and $k$-median. Nevertheless we are able to prove a lower bound of 2 on the price of depth reduction in the plane for $k$-center. 

\begin{theorem}[Lower Bound on Price of Depth Reduction for $k$-center in the Plane; Informal version of \Cref{thm:center}]\label{thm:introcenter}
	For every $k\in \mathbb{N}$ and every $\varepsilon>0$, there is a point-set $X\in\mathbb{R}^2$, such that $\Depth(X,k-2)$ is at least $2-\varepsilon$ for the $k$-center objective. Moreover, the price of explainability of $X$ is 1.
\end{theorem}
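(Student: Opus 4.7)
The plan is to construct, for each $k\in\mathbb{N}$ and each $\varepsilon>0$, a point set $X\subseteq\mathbb{R}^2$ consisting of $k$ small target clusters $C_1,\dots,C_k$, each of $k$-center radius exactly $r$ for a parameter $r>0$ that we will tune, such that (i) there is a depth-$(k-1)$ caterpillar decision tree of $k$-center cost $r$ matching the optimal $k$-clustering cost of $X$, and (ii) every decision tree of depth at most $k-2$ incurs $k$-center cost at least $(2-\varepsilon)r$. Dividing (ii) by the optimum $r$ will give $\Depth(X,k-2)\ge 2-\varepsilon$, while (i) yields price of explainability $1$.

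I will build $X$ by adapting the blocking construction underlying \Cref{thm:introkmean}. Place $C_1,\dots,C_k$ in $\mathbb{R}^2$ so that the following hold: (a) the unique axis-parallel partition separating the $C_i$ without slicing any one of them must peel the clusters off in a forced order, which is realized by a depth-$(k-1)$ caterpillar decision tree of cost $r$; and (b) the cluster positions are scaled so that, for every $i\ne j$, the minimum enclosing disk of $C_i\cup C_j$ has radius at least $(2-\varepsilon)r$. Property (b) will follow by choosing the staircase spacing of the blocking construction so that any two target clusters have centers separated by Euclidean distance close to $2r$ while still preserving the rigid axis-parallel behavior of (a). Property (a) immediately delivers (i).

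For the lower bound, consider any decision tree $T$ of depth at most $k-2$. Since $T$ has $k$ leaves and is not a caterpillar, it contains an internal node both of whose children are internal. The blocking property (a) prevents the axis-parallel cut at this node from partitioning the descendant target clusters into two groups each admitting a further clean axis-parallel sub-explanation, so the cut must either slice some cluster $C_i$ or eventually force two target clusters into a single leaf. Chasing the pieces, in either case some leaf $L$ of $T$ ends up containing points from two distinct target clusters $C_i$ and $C_j$ (possibly after absorbing a sliced fragment). By (b), the smallest enclosing disk of $L$ has radius at least $(2-\varepsilon)r$, so the $k$-center cost of $T$ is at least $(2-\varepsilon)r$, yielding the claim.

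The main obstacle is establishing (a) and (b) simultaneously. Rigidity in the sense of (a) tends to force the clusters to have overlapping axis-parallel projections — which, since each projection has length at most $2r$, caps adjacent centers' coordinate-wise distance by $2r$ — while cost amplification to a factor close to $2$ in (b) wants the centers separated by roughly $2r$ in Euclidean distance. Navigating this tension, while maintaining the full blocking property against all depth-$(k-2)$ tree shapes, including those that exploit slicing in clever ways, is the technical heart of the proof. This tension is also precisely what limits the $k$-center lower bound to a factor of $2-\varepsilon$, in contrast to the unbounded blow-up obtained for $k$-means and $k$-median in \Cref{thm:introkmean}, where cost aggregates additively over points rather than saturating at the worst cluster radius.
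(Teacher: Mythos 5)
Your high-level plan (a blocking/staircase construction, a depth-$(k-1)$ caterpillar tree for the upper bound, and an induction-style argument that any depth-$(k-2)$ tree must misclassify) matches the paper's approach in \Cref{sec:center}. However, the cost-amplification step — the actual content of the $2-\varepsilon$ bound — has a genuine gap, and it is exactly the step you defer as ``the technical heart of the proof.'' Your property (b) lower-bounds the enclosing-disk radius of the \emph{full} union $C_i\cup C_j$, but the offending leaf $L$ produced by your argument need not contain both clusters in their entirety; it may contain only one point of $C_i$ and one point of $C_j$ (e.g.\ a sliced fragment), in which case its $1$-center cost is only half the distance between those two particular points. To make a local, per-leaf argument work you would need \emph{every} pair of points from distinct clusters to be at distance at least $(4-2\varepsilon)r$, and, as you yourself observe, the blocking structure forbids this: blocking forces adjacent clusters to have overlapping axis-parallel projections, so some inter-cluster point pairs are at distance only about $2r$ (in the paper's construction, the nearest points of $C_1$ and $C_2$ are at distance $1=2r$, so a leaf containing just that pair costs no more than the optimum). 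Hence the tension you flag is not merely a technicality to be ``navigated''; it defeats the local argument outright.

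The paper resolves this differently: it does not try to make all inter-cluster distances large, but instead uses clusters that are $4$-point diamonds of radius $\nicefrac{1}{2}$ and proves a \emph{global} statement (\Cref{lem:center-price}) that \emph{any} partition $\Tilde{C}\neq C$ has $k$-center cost at least $1 = 2\cdot\opt$. The proof is a case analysis: points of distant clusters ($i\geq 5$) cannot be mixed cheaply; if some cluster is split, then by pigeonhole the four mutually far points $p_1^1,p_2^1,p_3^1,p_4^1$ cannot all be separated; and if all distant clusters are left intact, a chain of forced assignments among the interleaved clusters $C_1,\dots,C_4$ (using the extremal diamond points such as $p_4^4$, $p_3^3$) shows the partition must in fact equal $C$. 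Some argument of this global, combinatorial kind — rather than a bound on $\mathrm{rad}(C_i\cup C_j)$ — is what your proposal is missing, so as written it is an incomplete proof plan rather than a proof.
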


The rest of the paper is organized as follows. In \Cref{sec:pre}, we provide formal definitions that are relevant to this paper.  Our main impossibility results for $k$-means and $k$-median (i.e., \Cref{thm:introkmean}) in the plane is proved in \Cref{sec:plane} and in \Cref{sec:center} we prove \Cref{thm:introcenter} (which is related to the $k$-center objective). Finally, in \Cref{sec:conclusion} we conclude with some discussion and a couple of open problems.

\section{Preliminaries}
\label{sec:pre}
Throughout the paper, we say a point set $X$ has dimension $d$ with size $n$, and denote by $\dist(x, y)$ the Euclidean distance between points $x$ and $y$ in the point set.
Given two sets of points $X$ and $C$ in Euclidean space, we define
the $k$-means cost of $X$ for $C$
to be $\cost_2(C):=\underset{x \in X}{\sum}\left(\underset{c \in C}{\min}\ \left(\dist(x,c)\right)^2\right)$, the $k$-median cost to be $\cost_1(C):=\underset{x \in X}{\sum}\left(\underset{c \in C}{\min}\ \dist(x,c)\right)$, and the $k$-center cost to be $\cost_{\infty}(C):=\underset{x \in X}{\max}\left(\underset{c \in C}{\min}\ \dist(x,c)\right)$.
Given a set of points $X$, the $k$-means/$k$-median/$k$-center
objective is the minimum over all point-sets $C$  of cardinality $k$ of the $k$-means/$k$-median/$k$-center
cost of $X$ for $C$. Given a cluster  $Y\subseteq X$, the contribution of $Y$ to the
$k$-means/$k$-median cost of $X$ is simply the $1$-means/$1$-median cost of $Y$.

Dasgupta et al.~\cite{moshkovitz2020explainable} introduced  explainable clustering through decision trees. In a threshold decision tree $\mathsf{T}$ of depth $h_{\mathsf{T}}$ and $k$ leaves, every non-leaf node is labelled  by the hyperplane $x_r = \theta$ for some $r\in [d]$ and $\theta\in\mathbb{R}$. A point $x$ is then classified by the node to the left if it's $r^{\text{th}}$ coordinate is less than $\theta$
and to the right otherwise.

They also introduced the notion of \emph{price of explainability}, which is formally defined in \Cref{def:price-exp}. Furthermore, we define \emph{price of depth reduction} in \Cref{def:price-depth} to evaluate the cost of obtaining shallow explanations.

\begin{definition}[Price of explainability]
	\label{def:price-exp}
	Let $\Lambda(k)$ be the optimal  cost of  $k$-means/$k$-median/$k$-center objective for a  point-set $X$, and let $\Tilde{\Lambda}(k)$ be the optimal  cost for the same clustering objective but provided through explanations from a decision tree, the price of explainability is defined to be $\nicefrac{\tilde{\Lambda}(k)}{\Lambda(k)}$.
\end{definition}

\begin{definition}[Price of depth reduction]
	\label{def:price-depth}
	Let $\tilde \Lambda(k)$ be the optimal  cost of  $k$-means/$k$-medinas/$k$-center objective for a  point-set $X$ provided through explanations from a decision tree (of arbitrary depth), and let $\hat{\Lambda}_h(k)$ be the optimal cost for the same clustering objective but provided through explanations from a decision tree of depth $h$, then the price of depth reduction is defined to be $\Depth(X,h):=\nicefrac{\hat{\Lambda}_h(k)}{\tilde\Lambda(k)}$.
\end{definition}


 
\section{Impossibility of Depth Reduction in the Plane for $k$-means and $k$-median}
\label{sec:plane}
In this section, we prove a strong lower bound on the price of depth reduction for $k$-means and $k$-median in two dimensions. We construct a point-set $X \subseteq \mathbb{R}^2$, for which the optimal clustering can be explained by a decision tree of depth $k-1$ whose price of explanation is 1. However, we show that, the low-dimensionality does not help reduce the price of depth reduction. Formally, we prove the following theorem.

\begin{theorem}
	\label{thm:2d-impossible}
	For every $k\geq 3$, and every polynomial function $p:\mathbb{N}\to\mathbb{N}$, there exists a point-set $X \subset \mathbb{R}^2$  (where  $|X|$ is determined implicitly by $k$ and $p$, and $X$ can be represented in $|X|^{O(1)}$ bits) such that the following properties hold for $k$-means/$k$-median objectives:
	\begin{description}
		\item[Correctness:] There exists a decision tree of depth $k-1$ such that the price of explainability is 1.
		\item[Price of depth reduction:] For every $h<k-1$, every decision tree of depth $h$ yields a clustering whose cost is at least $p(|X|)$ times the optimum. In other words, $\Depth(X, k-2) \geq p(|X|)$.
	\end{description}
\end{theorem}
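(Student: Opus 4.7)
The plan is to mirror the structural template of the high-dimensional proof (\Cref{thm:imp-high-dim}) but realize it with a planar point-set. We construct $X\subseteq\mathbb{R}^2$ consisting of $k$ tight $\eps$-balls (the cluster \emph{cores}), together with a carefully chosen collection of \emph{blocker} points, each assigned in the natural clustering to one of the $k$ cores. The parameter $\eps$ will be chosen polynomially small in $|X|$ and $p(|X|)$, so that the total $k$-means/$k$-median cost of the natural clustering is negligible, while any axis-parallel cut that slices through a cluster forces a misclassified point to travel distance $\Omega(1)$ and pay $\Omega(1)$ extra cost; exactly as in \Cref{lem:clusters-high-dim}, this drives $\Depth(X,k-2)$ beyond $p(|X|)$.

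The construction itself is the crux. Since $\mathbb{R}^2$ offers only two axis-parallel directions, the proof cannot rely on dimensionality as in \Cref{thm:imp-high-dim}; instead, the $k$ cores will be placed at distinct positions $(x_i,y_i)$ in a staircase-like configuration inspired by extremal blocking sets from discrete geometry (such as those conjectured to be extremal for the Pach--Tardos conjecture~\cite{PT00}), and each cluster will then be augmented by $O(\poly(k))$ blocker points. The blockers are positioned so that every axis-parallel line that separates the cores into two groups both of size at least $2$ must pass through a blocker of some cluster. Each cluster will have $1$-means/$1$-median cost bounded by an absolute constant, arising from blockers at distance $O(1)$ from their core; this is amplified by adding many duplicate copies of each core point so that the blocker contribution is dwarfed by the near-zero cost of the concentrated cores.

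Optimality of the natural clustering and the existence of the desired depth-$(k-1)$ decision tree are the easier parts. Optimality follows along the lines of \Cref{lem:clusters-high-dim}: well-separation of the cores ensures that any alternative partition misplaces at least one point at distance $\Omega(1)$, so for $\eps$ sufficiently small the alternative cost exceeds $p(|X|)$ times the optimum. The depth-$(k-1)$ tree is a caterpillar that repeatedly peels off an \emph{extreme} cluster --- one whose core attains the uniquely smallest or largest coordinate in some axis direction among the surviving clusters --- via a cut chosen to avoid every blocker of every surviving cluster; such cuts are guaranteed to exist by design.

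The main obstacle is the impossibility argument for depth $\le k-2$. The structural observation used in the proof of \Cref{lem:high-dim-inapprox} still applies: any binary tree with $k$ leaves and depth $\le k-2$ contains an internal node $v$ whose two subtrees each have at least two leaves, so the cut at $v$ partitions the surviving subset $S\subseteq[k]$ of clusters into two groups $A\sqcup B=S$ with $|A|,|B|\ge 2$, and to avoid misclassification this cut must miss every core and blocker of every cluster in $S$. The heart of the proof will be the following geometric claim, to be established by induction on $|S|$ over every subset of clusters that can arise along a non-misclassifying root-to-node path: for every such $S$ with $|S|\ge 4$, every axis-parallel cut that separates $S$ into two parts each of size at least $2$ must pass through a blocker of some cluster in $S$. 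The inductive step is the delicate piece and reduces to verifying that after removing any peelable extreme cluster from $S$, the blockers assigned to the remaining clusters continue to cover every potentially balanced separator of $S\setminus\{\text{extreme}\}$; this in turn rests on the self-similar structure of the staircase placement and is where the subtlety of the construction lies.
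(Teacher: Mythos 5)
There is a genuine gap, and it sits in the cost accounting rather than in the tree-structure argument. Your plan wants two things simultaneously: (i) the natural clustering has negligible cost because each cluster is a tight $\eps$-ball, and (ii) every axis-parallel line separating the surviving cores into two groups of size at least $2$ passes through a blocker \emph{belonging to one of the clusters}. But a blocker can only block cuts in the gap between \emph{other} clusters if it is placed inside that gap, i.e.\ at distance comparable to the inter-cluster separation from its own core. Then the optimal clustering already pays that distance for the blocker, so the optimum is not negligible; and when a shallow tree slices off that single blocker, the extra cost it pays is of the same order as what it already paid in the optimum. With unweighted points at a single scale the ratio $\Depth(X,k-2)$ is therefore bounded by a constant, nowhere near $p(|X|)$. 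Your proposed fix --- duplicating the core points many times --- does not help: duplicates at the core contribute essentially zero, so they neither reduce the blockers' absolute contribution to the optimum nor increase the penalty for misassigning one blocker; the ratio stays bounded. (Your own sentence that each cluster has $1$-means cost ``bounded by an absolute constant, arising from blockers at distance $O(1)$'' already contradicts the earlier claim that the natural cost is negligible.) This is precisely the tension the paper's construction is built to resolve: in \Cref{lem:point-set-opt} the clusters are not tiny balls at all --- each cluster consists of two far-apart points straddling the entire previous construction, the gaps $h_i$ grow geometrically, and the weights $w_i$ decrease so that every cluster costs roughly the same ($\approx h_k^2$ for $k$-means). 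Any alternative clustering must mix two scales $j>j'$, and the resulting penalty $w_j h_j^2/4$ exceeds the \emph{total} optimal cost by roughly $(h_j/h_{j-1})^2/k$, which is made at least $p(|X|)$ by the choice of the ratio between consecutive scales. Without this multi-scale, weight-balanced mechanism (or an equivalent one), the ``tiny cores plus constant-scale blockers'' template cannot yield an unbounded price of depth reduction.

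Separately, the part of your argument that you flag as the crux --- the inductive claim that for every surviving subset $S$ of clusters, every cut splitting $S$ into two parts of size at least $2$ hits a blocker of some cluster in $S$ --- is exactly the content of the paper's \Cref{lem:imp-misclassify}, where it is proved by induction on $k$ using the fact that the outermost cluster straddles all earlier ones in the relevant coordinate, so the only non-trivial, non-misclassifying root cut peels off $C_k$. You correctly identify the pigeonhole step and the shape of the induction, but you leave the inductive step unproved, and as stated it depends on a blocker placement whose feasibility (at bounded distance from the cores) is doubtful for the reasons above. So the proposal captures the right high-level skeleton (onion-peeling tree of depth $k-1$, blocking of balanced cuts, misclassification $\Rightarrow$ blow-up) but is missing the two ingredients that make the paper's proof work: the scale/weight schedule that turns one misclassified point into a $p(|X|)$-factor loss, and the verified straddling geometry that forces the peeling order.
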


The proofs of the above theorem for the $k$-means and $k$-median objectives are very similar. Briefly, the underlying idea is to construct a point-set whose clusters are positioned such that the axis-parallel (either $x$ or $y$-axis) interval between any two clusters except one pair are blocked by a new cluster. Since the partition of clusters must be parallel to either $x$ or $y$-axis, we have that any cut line going through that interval will separate the points in the block cluster, and assign at least one point incorrectly. Therefore, to produce a perfect clustering, the decision tree must follow an `onion-peeling' approach, that is to output one cluster at each partition of the remaining space, leading to a decision tree of depth $k-1$. To get a lower bound on the price of depth reduction, it is imperative that the clusters are geometrically unbalanced, and we choose the distances between the clusters and the number of points in each cluster carefully.\par

\subsection{Point-set Construction and Optimal Clustering}
\label{subsec:data-opt}
We now describe the point-set that will be used to prove the lower bound on the price of depth reduction. 
We describe the point-set using weights as it simplifies the analysis. It is trivial to get rid of the weights by simply replacing a point $p$ having weight $w$, with $w$ copies of the unweighted point $p$. 

Informally, the point-set is constructed by  successively adding sets of points $C_i$ for $1 \leq i \leq k$ to it. Later, we will prove that the optimum $k$-means/$k$-median clustering is given by the partitioning $C:=(C_1, C_2, \ldots, C_k)$. We parameterize the  construction of the point-set through an increasing sequence $\{d_i\}_{i=1}^k$ and a decreasing sequence $\{w_i\}_{i=1}^k$ of integers, that fix the positions and the weights of the points respectively. Later we instantiate these parameters for a specific setting. 
\begin{tbox}
	\textbf{Weighted point-set} $X(w,d)$: \\
	\underline{Parameters}: Two sequences of positive integers, $\{w_i\}_{i=1}^{k}$ and  $\{d_i\}_{i=1}^{k}$\\
	Initialize the set $C_1$ having two copies of the same point, i.e.,  $p_1^1=p_1^2=(0,0)$ each with weight $w_1$, and another cluster $C_2$ having two copies of the same point, i.e., $p_2^1=p_2^2=(d_2,0)$ with weight $w_2$.  For $i = 3,4,\ldots, k$, add the sets $C_i$ as follows.
	\begin{itemize}
		\item If $i$ is odd, add two points with the same weight $w_i$,  where $p_i^1 = \left(\left(p_{i-2}^{1}\right)_x-1, \left(p_{i-2}^{1}\right)_y+d_i\right), p_i^2 = \left(\left(p_{i-1}^2\right)_x+1,\left(p_{i-2}^{2}\right)_y+d_i\right)$ (where $(p_i^j)_x$ and $(p_i^j)_y$ denote the $x$ and $y$ coordinates of the point $p_i^j$ respectively).
		
		\item If $i$ is even, add two points with the same weight $w_i$,  where $p_i^1 = \left(\left(p_{i-2}^{1}\right)_x+d_i, \left(p_{i-2}^{1}\right)_y-1\right), p_i^2 = \left(\left(p_{i-2}^{2}\right)_x+d_i, \left(p_{i-1}^2\right)_y+1\right)$.
	\end{itemize}
\end{tbox}
In \Cref{fig-2d}, we have an illustration of $X(w,d)$. In addition, we have shown the partition of the space using axis parallel lines (coming from a decision tree of depth $k-1$), which recovers the clustering $C$. We remark that in \Cref{fig-2d}, $k$ is odd and $d_k$ is the distance between clusters $C_k$ and $C_{k-2}$, which is not shown in the figure, but is similar to the distance $d_5$ between $C_3$ and $C_5$.

\begin{figure}[!h]
	\resizebox{\columnwidth}{!}{\begin{tikzpicture}
			
			
			\definecolor{color1}{RGB}{245, 224, 157}
			\definecolor{color2}{RGB}{199, 239, 240}
			\definecolor{color3}{RGB}{208, 245, 215}
			\definecolor{color4}{RGB}{210, 212, 197}
			
			\filldraw[color1,opacity=0.2] (-7,17) rectangle (29, 23);
			\filldraw[color2,opacity=0.4] (21,-6) rectangle (29, 17);
			\filldraw[color3,opacity=0.4] (-7,8) rectangle (15, 14);
			\filldraw[color4,opacity=0.4] (8,-6) rectangle (15, 8);
			\filldraw[pink,opacity=0.4] (-7,2) rectangle (8, 8);
			\filldraw[green,opacity=0.3] (3.5,-6) rectangle (8, 2);
			\filldraw[cyan,opacity=0.3] (-7,-6) rectangle (3.5, 2);

			\draw[violet, thick, dashed] (2,0) ellipse (1.25 and 0.6);
			\filldraw[black] (2,0) circle (2pt) node[red, anchor=north west]{\Large $w_1$};
			
			\draw[violet, thick, dashed] (5,0) ellipse (1.25 and 0.6);
			\filldraw[black] (5,0) circle (2pt) node[red, anchor=north west]{\Large $w_2$};
			
			\draw[black,thick, <->](2, -1) -- (5, -1) node[midway, blue, anchor = north]{\Large $d_2$};
			
			\draw[violet, thick, dashed] (3.5,4) ellipse (3.5 and 0.7);
			\filldraw[black] (1.5,4) circle (2pt) node[red, anchor=north west]{\Large$w_3$};
			\filldraw[black] (5.5,4) circle (2pt) node[red, anchor=north west]{\Large$w_3$};
			
			\draw[violet, thick, dashed] (11,2) ellipse (1.3 and 3.25);
			\filldraw[black] (11,-0.5) circle (2pt) node[red, anchor=north west]{\Large$w_4$};
			\filldraw[black] (11,4.5) circle (2pt) node[red, anchor=north west]{\Large$w_4$};
			
			\draw[black,thick, <->](8,0) -- (8,4) node[midway, blue, anchor = west]{\Large$d_3$};
			
			\draw[black,thick, <->](5,-1.5) -- (11,-1.5) node[midway, blue, anchor = north]{\Large$d_4$};
			
			\draw[violet, thick, dashed] (6.25,11) ellipse (6.5 and 1);
			\filldraw[black] (1,11) circle (2pt) node[red, anchor=north west]{\Large$w_5$};
			\filldraw[black] (11.5,11) circle (2pt) node[red, anchor=north west]{\Large$w_5$};
			
			\draw[black,thick, <->](9,4) -- (9,11) node[midway, blue, anchor = west]{\Large$d_5$};
			
			\draw[brown,line width=4pt, dotted](15.6,14.3) -- (20.4,16.7);

			\draw[violet, thick, dashed] (25,5.25) ellipse (2.5 and 10);
			\filldraw[black] (25,-3.5) circle (2pt) node[red, anchor=north west]{\Large$w_{k-1}$};
			\filldraw[black] (25,14) circle (2pt) node[red, anchor=north west]{\Large$w_{k-1}$};
			
			
			\draw[violet, thick, dashed] (10.75,20) ellipse (17 and 1.5);
			\filldraw[black] (-4,20) circle (2pt) node[red, anchor=north west]{\Large$w_{k}$};
			\filldraw[black] (25.5,20) circle (2pt) node[red, anchor=north west]{\Large$w_{k}$};
			
			\draw[black,thick, <->](22,13.5) -- (22,20) node[midway, blue, anchor = west]{\Large$d_{k}$};

	\end{tikzpicture}}
	
	\caption{Illustration of $X(w,d)$ with the explanation from a decision tree of depth $k-1$. The dotted oval indicates the optimal clustering assignment, and each colored block is the corresponding subspace produced by axis-parallel cut. Each point is associated with a weight $w_i$, and a new cluster is added with distance of $d_i$ from previous cluster $C_{i-2}$. }
	\label{fig-2d}
\end{figure}
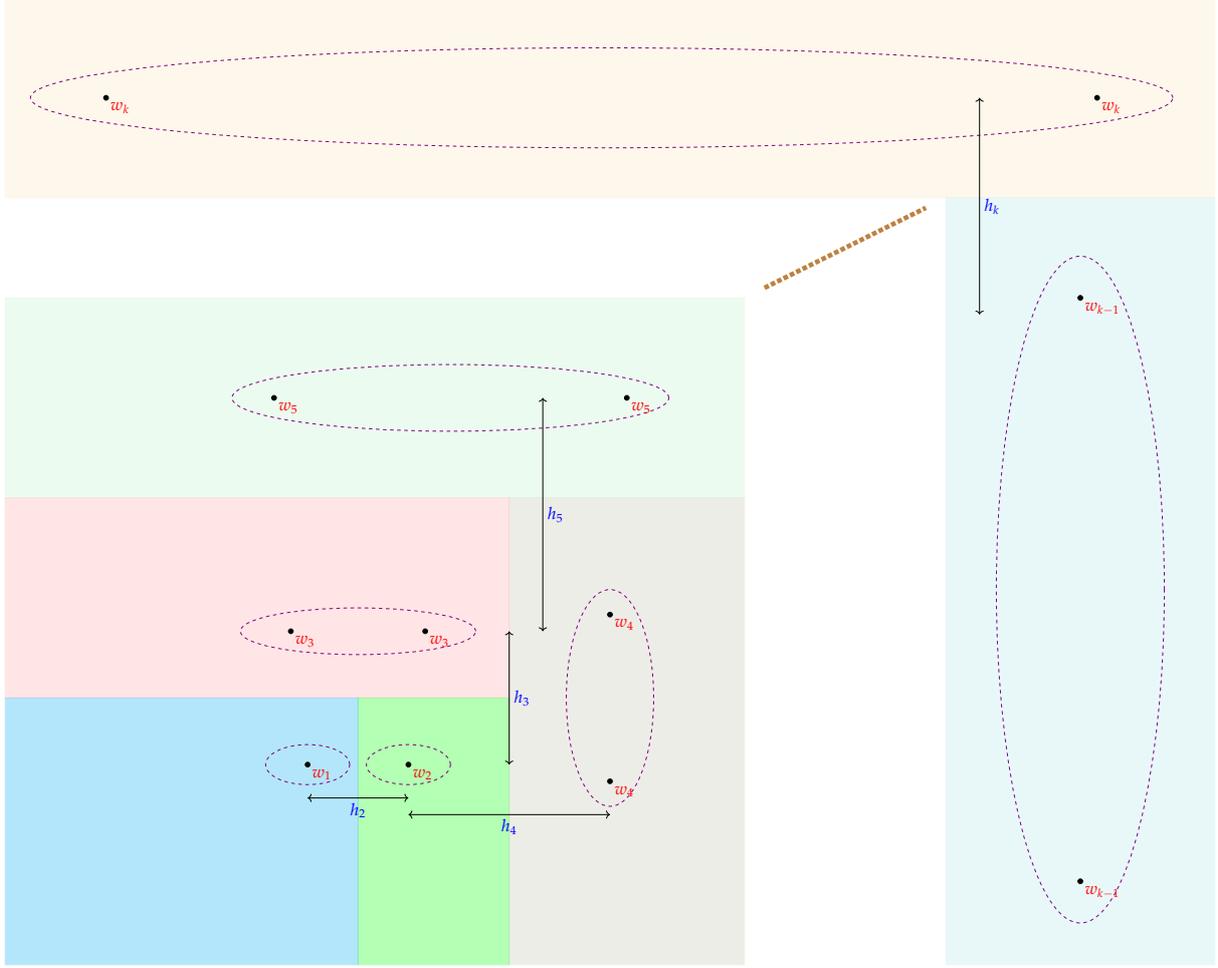

We make the following observations about the point-set $X(w,d)$:
\begin{enumerate}
	\item Every set $C_i$ has two points, both with the same weight and there are $2k$ points in total. Also note that for odd $i$, the $y$ coordinate are the same for the two points $p_i^1$ and $p_i^2$, and for even $i$, the $x$ coordinate are the same. 
	\item The straight line connecting the two points in the same cluster is either parallel to $x$-axis (if $i$ is odd) or $y$-axis (if $i$ is even).
	\item For any $r<i$ when $i,r$ are odd, we have $(p_i^1)_x < (p_r^1)_x$ and $(p_i^2)_x > (p_r^2)_x$. Similarly, when $r < i$ and $i,r$ are even, $(p_i^1)_y <(p_r^1)_y$ and $(p_i^2)_y > (p_r^2)_y$. Therefore, the set $C_i$ `blocks' any possible cut within the interval of the two points on the $y$ axis if $i$ is even or the $x$ axis if odd. This forbids any partition that gives a low-depth decision tree to classify the point-set into exactly $C_1, C_2, \dots, C_k$, which is the key property to complete the proof.
\end{enumerate}

Remarkably, our construction imposes an exponentially increasing interval between the latest added cluster $C_i$ and previous clusters. However, this does not result in a numerically egregious point set. As shown in \Cref{lem:size}, together with the specification of parameters $w, d$, $X(w,d)$ can be represented using polynomial space.

We give the complete proof of \Cref{thm:2d-impossible} in the next subsection.

\subsection{Proof of \Cref{thm:2d-impossible}}
We start with showing that the sets $C_i$ attain the optimum for both clustering objectives with proper choices of the sequences $\{w_i\}_{i=1}^k,\{d_i\}_{i=1}^k$. Further, any alternative clustering explanations can introduce a significant loss on the objective.
\begin{lemma}
	\label{lem:point-set-opt}
	We can choose the sequences $\{w_i\}_{i=1}^k,\{d_i\}_{i=1}^k$  such that the optimal clustering for $X(w,d)$ w.r.t. $k$-means/$k$-median objective is given by the set of clusters $C := (C_1,C_2, \ldots,C_{k})$ as defined. Furthermore, for any polynomial function p: $\mathbb{N} \rightarrow \mathbb{N}$, any other choice of explainable clustering $\Tilde{C} \neq C$ has $k$-means/$k$-median cost at least $p(|X|)$ times the optimum.
\end{lemma}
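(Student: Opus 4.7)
My plan is to take both sequences as geometric progressions, $h_i := H^{i-1}$ and $w_i := H^{\alpha(k-i)}$, where $\alpha = 2$ for $k$-means and $\alpha = 1$ for $k$-median, with $H$ a large positive integer chosen at the end as a function of $k$ and $p$. The exponent $\alpha$ is arranged so that $w_i d_i^\beta$ (with $\beta = 2$ or $1$ respectively) is equal across all $i \geq 3$, yielding a clean comparison between $\cost(C)$ and the cost of any cheap perturbation of $C$.

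First I would compute $\cost(C)$: clusters $C_1, C_2$ consist of coincident points and so contribute $0$, and for $i \geq 3$, unwinding the coordinate recurrences defining $p_i^1, p_i^2$ gives $d_i = \Theta(h_{i-1})$ (for odd $i$, $(p_i^2)_x - (p_i^1)_x = \sum_{\text{even } j,\, 2 \leq j \leq i-1} h_j + O(i)$, dominated by $h_{i-1}$; the even case is symmetric, with $y$-coordinates). Hence each $C_i$ contributes $\Theta(w_i d_i^\beta) = \Theta(H^{\alpha(k-2)})$, constant in $i$, so $\cost(C) = \Theta(k \cdot H^{\alpha(k-2)})$.

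Next I would lower-bound $\cost(\tilde C)$ for every $\tilde C \neq C$. Since $|C_i| = 2$ for each $i$ and both partitions have $k$ parts, pigeonhole forces some $\tilde C_j$ to contain points $p \in C_i$ and $q \in C_r$ with $i < r$. The main geometric claim --- which I would verify by induction on $r$ --- is that the ``outermost'' coordinate of $p_r^1, p_r^2$ (the $y$-coordinate if $r$ is odd, the $x$-coordinate if $r$ is even) differs from the corresponding coordinate of every point in $\bigcup_{s<r} C_s$ by at least $h_r - O(k)$; consequently $d(p, q) \geq h_r/2$ once $H$ is large. Combined with the standard convexity inequality $\cost_\mu(\tilde C_j) \geq \tfrac{\min(w_p, w_q)}{2}\, d(p, q)^\beta$ for the $1$-means/median cost of a set containing two points of weights $w_p, w_q$, and using that our weights are monotonically decreasing so $\min(w_i, w_r) = w_r$, we obtain $\cost(\tilde C) \geq \cost(\tilde C_j) = \Omega(w_r h_r^\beta) = \Omega(H^{\alpha(k-1)})$, again independent of $r$ by the same balancing.

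Dividing, $\cost(\tilde C)/\cost(C) = \Omega(H^\alpha / k)$. Since $|X| = 2 \sum_i w_i = \Theta(H^{\alpha(k-1)})$, choosing $H$ sufficiently large (as a polynomial in $k$ and $p$) makes this ratio exceed $p(|X|)$, and a routine bit-count confirms that $X$ fits in $|X|^{O(1)}$ bits. Optimality of $C$ follows for free from the same bound. The main technical obstacle is the inductive proof of the geometric claim $d(p_i^a, p_r^b) \geq h_r/2$, which requires tracking signs and magnitudes through the two interleaved (odd/even) recurrences and verifying that no lower-order terms conspire to cancel the $h_r$ offset.
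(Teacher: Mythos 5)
Your construction and overall strategy mirror the paper's: geometric $h_i$, weights chosen so that $w_i h_{i-1}^\beta$ is the same for all clusters, pigeonhole to find a cluster $\tilde C_j$ mixing points of $C_i$ and $C_r$ ($i<r$), the separation bound $d(p,q)\geq h_r/2$, and the min-weight lower bound $\cost(\tilde C_j)=\Omega(w_r h_r^\beta)$ are all exactly the ingredients of the paper's argument. However, your final quantitative step contains a genuine error. You measure $|X|$ as the number of \emph{unweighted} copies, $|X|=2\sum_i w_i=\Theta(H^{\alpha(k-1)})$, while the ratio your parameters achieve is only $\cost(\tilde C)/\cost(C)=\Omega(H^{\alpha}/k)$. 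Then the requirement becomes $H^{\alpha}/k\gtrsim p\bigl(\Theta(H^{\alpha(k-1)})\bigr)$, and since the right-hand side grows in $H$ at least like $H^{\alpha(k-1)}$ for any non-constant polynomial $p$ (already for $p(n)=n$), no ``sufficiently large $H$'' exists once $k\geq 3$; the claim that choosing $H$ as a polynomial in $k$ and $p$ closes the gap is circular, because enlarging $H$ enlarges $|X|$, and hence $p(|X|)$, faster than it enlarges the ratio.

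The paper avoids this by treating $X(w,h)$ as a \emph{weighted} point-set with $|X|=2k$ points (this is explicit in its size lemma), so $p(|X|)=p(2k)$ is a fixed quantity independent of the scale parameters; it then takes the common ratio $h_{i+1}/h_i=4\sqrt{k\,p(2k)}$ for $k$-means (and $4k^3p(k)^3$ for $k$-median) with $w_i=h_k^2/h_{i-1}^2$, which makes the per-level gap $h_j^\beta/h_{j-1}^\beta$ already exceed $p(2k)$ up to the factor $O(k)$ lost in balancing. Your argument becomes correct if you adopt the same convention ($|X|=2k$, weights as multiplicities counted separately) and set $H$ to be a suitable polynomial in $k$ and $p(2k)$; as written, with $|X|$ equal to the total multiplicity, the stated conclusion cannot be reached by your parameter choice at all, and you would also need to recheck the bit-size claim under that reading. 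The remaining parts of your proposal (cost of $C$, the separation claim via the interleaved recurrences, and the convexity bound $\cost_\mu(\tilde C_j)\geq \tfrac{1}{2}\min(w_p,w_q)\,d(p,q)^\beta$) are sound and match the paper, with your handling of which point carries the smaller weight actually being cleaner than the paper's phrasing.
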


\begin{proof}
	Consider an alternative clustering $\tilde{C}:=(\tilde C_1, \ldots, \tilde C_k)$ of the weighted point-set $X$. We now specify the parameters that lead to $\cost(\Tilde{C})  \geq p(|X(w,d)|) \cdot \cost(C)$. 
	
	Let $d_i=\left( 4\sqrt{k \cdot p(2k)} \right)^i$ and $w_i= \nicefrac{d_k^2}{d_{i-1}^2}$ for $i \geq 3$ and $w_1=w_2=\nicefrac{d_k^2}{d_2^2}$. This choice of parameters implies that the $k$-means cost of each cluster $C_i$ are approximately equal. This is because the distance between the two points in the cluster $C_{2i+1}$ is $d_2+d_4+ \dots + d_{2i}+i+1$ and the distance between two points in the cluster $C_{2i}$ is $d_3+d_5+\dots+d_{2i-1}+i$. Hence, we have $d_i \geq \sum_{j \leq i}d_j$, $\frac{w_i d_{i-1}^2}{2}\leq \cost(C_i) \leq 4 w_i d_{i-1}^2$, for every $i \in [k]$. This implies that
	$$\forall{i\in[k]},\ \ \  \frac{d_k^2}{2} \leq \cost(C_i) \leq 4 d_k^2. $$
	As $\Tilde{C} \neq C$, there exist $i,j,j' \in [k]$, with $j > j'$ and $a,b \in \{1,2\} $ such that $p_j^a, p_{j'}^b \in \Tilde{C}_i$. Let $\Tilde{\mu}_i$ be the center of $\Tilde{C}_i$. As the distance between $p_j^a$ and $p_{j'}^b$ is at least $d_j$, this implies that either $\dist(p_{j}^a, \Tilde{\mu}_i) \geq d_j/2$ or $\dist(p_{j'}^b, \Tilde{\mu}_i) \geq d_j/2$. As the point $p_j^a$ has a smaller weight than the point $p_{j'}^b$, we can lower bound the cost contributed by the cluster $\Tilde{C}_i$ by
	\begin{align*} \cost(\Tilde{C}_i) \geq \frac{w_jd_j^2}{4}=\frac{w_jd_{j-1}^2}{4}\frac{d_j^2}{d_{j-1}^2} &\geq \frac{\cost(C)}{16k} \frac{d_j^2}{d_{j-1}^2} \geq p(2k) \cost(C) = p(|X|) \cost(C) \end{align*}
	
	The analysis of $k$-median is almost the same as $k$-means except that the sequence $\{d_i\}_{i=1}^k$ is defined by $d_i=\left(4 k^3 \cdot p(k)^3 \right)^i$, and the sequence $\{w_i\}_{i=1}^k$ is $w_i= \frac{d_k}{d_i}$.   
\end{proof}

With the specification of $d, w$ in \Cref{lem:point-set-opt}, we show the size of the point-set can be represented using bits 
polynomial in $|X|$.

\begin{lemma}[Size of this Point-set] \label{lem:size}
	The point-set $X(w,d)$ can be stored using $O(|X|^{O(1)})$ bits.
\end{lemma}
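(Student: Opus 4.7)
The plan is to bound the magnitude of each coordinate and each weight appearing in $X(w,h)$, and then to observe that the unweighted cardinality $|X|=\sum_{i=1}^{k}2w_i$ (after replacing each weighted point of weight $w$ by $w$ identical unweighted copies) is already exponentially large relative to these magnitudes, so that $O(|X|^{O(1)})$ bits certainly suffice to store the whole construction.

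First I would unify the two instantiations from \Cref{lem:point-set-opt} by writing $h_i = B^i$, where $B$ is a polynomial in $k$: $B = 4\sqrt{k\cdot p(2k)}$ for the $k$-means setting and $B = 4k^3 p(k)^3$ for the $k$-median setting. Since $p$ is a polynomial, $B=\poly(k)$ and $\log B=O(\log k)$. From the recursion in the \textbf{Weighted point-set} box, every $x$- or $y$-coordinate of every point in $X(w,h)$ has the form $\pm\sum_{i\in S}h_i \pm O(k)$ for some $S\subseteq[k]$, and is therefore bounded in absolute value by $2h_k = 2B^k$. Hence each coordinate can be stored in $O(\log h_k) = O(k\log k)$ bits. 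In the $k$-means setting the largest weight is $w_1 = h_k^2/h_2^2 \le h_k^2 = B^{2k}$, and in the $k$-median setting it is $w_1 = h_k/h_1 \le h_k = B^k$, so each weight also fits in $O(k\log k)$ bits. Since there are only $2k$ distinct points and $k$ distinct weights, the entire weighted representation of $X(w,h)$ fits in $O(k^2\log k)$ bits.

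The essential step is now to relate these quantities to $|X|$. Since $|X|\ge 2w_1$, we obtain $|X|\ge 2B^{2k-4}$ in the $k$-means setting and $|X|\ge 2B^{k-1}$ in the $k$-median setting, so in either case $\log|X| = \Omega(k\log B) = \Omega(k\log k)$. Substituting back, the weighted representation has bit-complexity $O(k^2\log k) = O((\log|X|)^2) = O(|X|^{O(1)})$, as required by the lemma. If instead one prefers the fully unweighted representation, in which each of the $|X|$ copies is written out explicitly, the total storage becomes $|X|\cdot O(\log|X|) = O(|X|^{O(1)})$, so the bound holds a fortiori. There is no real obstacle in this argument; the only thing to check is that $w_1$ grows fast enough in the exponent of $B$ to absorb the $O(k\log k)$ bits needed per coordinate, and this is built into the choice of the geometric sequence $h_i=B^i$ together with the rule $w_i \propto h_k^2/h_{i-1}^2$ (resp. $h_k/h_i$) from \Cref{lem:point-set-opt}.
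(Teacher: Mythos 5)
Your intermediate bounds coincide with the paper's: every coordinate is an integer of magnitude $O\!\left(\sum_i h_i + k\right)=O(h_k)$, hence takes $O(\log h_k)=O(k\log k)$ bits, the largest weight is $w_1$ with $\log w_1=O(k\log k)$, and so the weighted description of $X(w,h)$ takes $O(k^2\log k)$ bits. Where you genuinely diverge from the paper is the final accounting, because you read $|X|$ as the unweighted multiset size $\sum_{i=1}^{k}2w_i$. The paper instead treats $X(w,h)$ as the weighted point-set with $|X|=2k$ --- this is the convention used throughout \Cref{sec:plane}; in particular, in the proof of \Cref{lem:point-set-opt} the guarantee $p(|X|)$ is instantiated as $p(2k)$ --- and it concludes directly that the storage $O(k(\log h_k+\log w_1))=O(k^2\log k)$ is $O(|X|^{O(1)})$ since $h_k$ and $w_1$ are $k^{O(k)}$. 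Under that convention your key inequality $|X|\ge 2w_1$ is false, but nothing breaks: your own bound $O(k^2\log k)$ is already polynomial in $2k$, so the lemma follows without appealing to the magnitude of the weights at all. Under your unweighted reading the statement you prove is in fact stronger than needed (polylogarithmic in $|X|$ for the weighted encoding, or $O(|X|\log|X|)$ if the copies are listed explicitly), but that reading sits awkwardly with the rest of the argument, since the parameters $h_i$ are chosen in terms of $p(2k)$ and letting them depend on $p\!\left(\sum_i 2w_i\right)$ would be circular (the $w_i$ are defined from $h_k$). So the proof is correct and is essentially the paper's calculation; the one substantive difference is the meaning of $|X|$, and you should fix a convention explicitly --- if you adopt the paper's $|X|=2k$, drop the step $|X|\ge 2w_1$ and conclude directly from $O(k^2\log k)=O(|X|^{O(1)})$.
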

\begin{proof}
	The $x$-coordinates for every point in this point-set are integral and upper bounded by $k+\sum_{i=1}^{\lfloor k/2 \rfloor}d_{2i}$. Similarly, $y$-coordinates are upper bounded by $k+\sum_{i=1}^{\lfloor k/2 \rfloor}d_{2i+1}$. Recall that $\{d_i\}_{i=1}^k$ is an increasing sequence and $\{w_i\}_{i=1}^k$ is decreasing. Hence, each point in the point-set can be stored in space $O(\log d_k)$. Further, each weight can be stored in space $O( \log w_1)$ (the weights are scaled to ensure integrality). Hence, the entire point-set can be stored in space $O(k(\log d_k+\log w_1))$. As become evident, both $d_k$ and $w_1$ are $O(k^k)$. Recall $|X| = 2k$, hence $X(w,d)$ requires at most $O(|X|^{O(1)})$ bits to represent.
\end{proof}

Next, we show that for a weighted point-set $X(w,d)$ described above, a $k-1$-depth decision tree gives the optimal clustering, which concludes the correctness of \Cref{thm:2d-impossible}.

\begin{lemma}
	\label{lem:imp-correct}
	There exists a decision tree of depth $k-1$ which outputs the optimal clustering for the weighted point-set $X$ w.r.t. $k$-means/$k$-median objective.
\end{lemma}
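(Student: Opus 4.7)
The plan is an ``onion peeling'' construction: build the decision tree as a right-leaning spine of length $k-1$ whose $i$-th internal node (for $0 \le i \le k-3$) splits off cluster $C_{k-i}$ as a leaf, while the other child (an internal node at depth $i+1$) continues to process the remaining clusters $\{C_1,\dots,C_{k-i-1}\}$. At depth $k-2$, the final internal node separates $C_1$ from $C_2$ with a vertical cut at $x = h_2/2$, say. This produces a tree with exactly $k$ leaves and depth $k-1$, and the goal is to choose the $k-2$ preceding cuts so that the induced partition is exactly $C=(C_1,\dots,C_k)$, which is optimal by \Cref{lem:point-set-opt}.

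The axis and threshold for each peeling cut are dictated by the parity of the index being peeled off. By the construction of $X(w,h)$, the ``newest'' cluster $C_i$ lies either strictly above all earlier clusters in the $y$-direction (if $i$ is odd) or strictly to the right of them in the $x$-direction (if $i$ is even). Hence, for odd $i \ge 3$ I use a horizontal cut $y=\theta_i$, and for even $i \ge 4$ I use a vertical cut $x=\theta_i$. To verify that each threshold is well-defined, one unfolds the recurrence: for odd $i\ge 3$, the maximum $y$-coordinate among points in $C_1,\dots,C_{i-1}$ is at most $h_3+h_5+\cdots+h_{i-2}+1$ (the $+1$ comes from the unit offset introduced in the definition of even-index clusters), while every point of $C_i$ sits at $y = h_3+h_5+\cdots+h_i$. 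Since $\{h_i\}$ is a positive increasing sequence with $h_i\ge 1$, the interval $(h_3+\cdots+h_{i-2}+1,\ h_3+\cdots+h_i)$ is nonempty, and any $\theta_i$ in this range gives a valid cut. The analogous computation in the $x$-direction handles even $i$.

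The only mild obstacle is verifying that each peeling cut does not accidentally split the perpendicular ``long'' cluster $C_{i-1}$ whose extent in the cut direction is nontrivial. This is precisely what observations (2) and (3) following the definition of $X(w,h)$ are designed to guarantee: $C_{i-1}$ extends in the direction of the cut by only the unit additive offset coming from the $\pm 1$ shifts in the construction, whereas the gap between $C_{i-1}$ and $C_i$ in that direction is of order $h_i$. Thus, a threshold placed in the interval identified above separates $C_i$ from every remaining cluster (including $C_{i-1}$) without splitting any of them. With this in place, descending the spine peels off $C_k, C_{k-1}, \dots, C_3$ in order, and the bottom node cleanly separates $C_1$ and $C_2$ since they share $y$-coordinate $0$ but differ in $x$-coordinate by exactly $h_2$. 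The induced leaf partition coincides with the optimal clustering $C$, completing the proof.
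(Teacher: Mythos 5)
Your proposal is correct and follows essentially the same route as the paper's proof: an onion-peeling, totally skewed tree of depth $k-1$ that peels off $C_k, C_{k-1}, \dots$ with a horizontal cut for odd indices and a vertical cut for even indices, ending with a single cut separating $C_1$ from $C_2$. The only difference is cosmetic: the paper fixes each threshold at $(p_{k-i+1}^1)_x-\nicefrac{1}{2}$ or $(p_{k-i+1}^1)_y-\nicefrac{1}{2}$, while you place it anywhere in the (nonempty) gap, which you verify explicitly via the partial sums of $\{h_i\}$.
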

\begin{proof}
	We describe a decision tree of depth $k-1$ to output the clustering $C=(C_1, \ldots, C_k)$. It has depth $k-1$, with there being a leaf at every level of the decision tree. At the $i^{\text{th}}$  level of the decision tree, if $k-i+1$ is even, the cut is along the $x$-axis with threshold at $x=(p_{k-i+1}^1)_x-\nicefrac{1}{2}$. For any point, if the $x$-coordinate of that point is larger than this threshold, it is classified to the cluster $C_{k-i+1}$. Otherwise, if $i \neq k-1$ it proceeds to the next level of the tree, and if $i=k-1$, it is classified to the cluster $C_1$. If $k-i+1$ is odd, the cut is along the $y$-axis with threshold $y=(p_{k-i+1}^1)_y-\nicefrac{1}{2}$. If the $y$-coordinate of a point is larger than this threshold, it is classified to the cluster $C_{k-i+1}$. Otherwise, if $i \neq k-1$ it proceeds to the next level of the tree, and if $i=k-1$, it is classified to the cluster $C_1$.
\end{proof}
We now show that this particular clustering cannot be induced by any decision tree of lower depth. And any shallower decision tree yields another clustering.

\begin{lemma}
	\label{lem:imp-misclassify}
	Every decision tree of depth less than $k-1$ outputs a clustering which misclassifies (w.r.t. the clustering $C$) at least one point.
\end{lemma}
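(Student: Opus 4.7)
The plan is to prove that any decision tree which induces the clustering $C=(C_1,\ldots,C_k)$ on $X(w,h)$ without misclassification must be a \emph{caterpillar} of depth exactly $k-1$: at every internal node one child is a leaf (holding a single cluster) and the other is a non-leaf subtree processing the remaining clusters. Since a binary tree with $k$ leaves and depth strictly less than $k-1$ is necessarily not a caterpillar, it has some internal node at which both children are internal; I will show that the cut at such a node is forced to split a cluster, hence misclassify at least one point.

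The core of the argument is the following structural lemma, which I would prove first. Define $X_r := C_1 \cup C_2 \cup \cdots \cup C_r$ for $2 \le r \le k$. Then every axis-parallel cut of $X_r$ that is non-trivial (both sides non-empty) and does not split any cluster must place $C_r$ alone on one side and $C_1,\ldots,C_{r-1}$ on the other. The proof is a case analysis on the cut orientation and the parity of $r$, driven by the nesting property highlighted in the third observation of \Cref{subsec:data-opt}: the $x$-intervals $[(p_i^1)_x,(p_i^2)_x]$ of the odd-index clusters form a strictly nested family, and the $y$-intervals of the even-index clusters likewise form a strictly nested family. A vertical cut not splitting any cluster in $X_r$ must therefore have its threshold outside the widest odd-cluster $x$-interval contained in $X_r$; direct inspection shows that such a threshold either leaves one side empty (trivial) or splits the outermost-in-$x$ cluster itself, with one exception: when $r$ is even, $C_r$'s two points share an $x$-coordinate lying strictly to the right of every other point, and a narrow feasible window of thresholds exists that isolates $C_r$. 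The horizontal case is symmetric, covering the case $r$ odd.

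Given the lemma, induction on $r$ finishes the proof. The root of any valid tree, applied to $X = X_k$, must by the lemma peel off $C_k$ to one side; the non-leaf child processes $X_{k-1}$ and by the inductive hypothesis must itself be a caterpillar of depth $k-2$, peeling $C_{k-1},\ldots,C_2$ one by one. Hence every valid tree is a caterpillar of depth $k-1$. Conversely, any tree of depth at most $k-2$ on $k$ leaves must contain an internal node both of whose children are internal; at that node, the cut would partition the remaining clusters into two groups of size at least $2$ each without splitting any cluster, which the lemma forbids. Consequently the cut must split some cluster, producing a misclassified point.

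The main obstacle is the case analysis establishing the structural lemma: for each of the four combinations of cut direction and parity of $r$, one must track the feasible threshold ranges forced by the nested-interval structure and verify that only thresholds isolating $C_r$ yield a non-trivial non-splitting cut. The specific growth rate of the sequence $\{h_i\}$ plays no role in this combinatorial obstruction (it is used only in \Cref{lem:point-set-opt} to amplify the cost of any misclassification); what matters here is merely the relative ordering of coordinates encoded by the $\pm 1$ offsets in the construction.
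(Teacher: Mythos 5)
Your proposal is correct and follows essentially the same route as the paper: the structural claim that any non-trivial axis-parallel cut of the prefix $C_1\cup\dots\cup C_r$ which splits no cluster must isolate $C_r$ is exactly the case analysis the paper performs at the root (using the nested $x$-intervals of odd clusters and $y$-intervals of even clusters), and your induction on $r$ peeling off the last cluster matches the paper's induction on $k$ after discarding trivial nodes. The extra ``converse'' caterpillar argument is redundant but harmless.
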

\begin{proof}
	We say that a node $v$ in the decision tree is \emph{trivial} for the point set $X$, if the node has a child, which contains exactly the same points as $v$. In other words, a trivial node does not partition any points in the remaining point-set into a cluster. We can assume that the minimum depth decision tree does not contain any trivial nodes because we can contract that node without changing the final classification of $X$, and the depth can only decrease as a result of this contraction. Hence, without loss of generality, we can assume that every node in the tree is non-trivial. 
	
	We now apply induction on $k$. If there are only two points in the point-set, there is only one way to correctly classify them and that uses a decision tree of depth $1$. This proves the base case. 
	
	Let $k$ be odd. The two points in the cluster $C_k$: $p_k^1 = \left(\left(p_{k-2}^{1}\right)_x-1, \left(p_{k-2}^{1}\right)_y+d_i\right)$ and $p_k^2 = \left(\left(p_{k-1}^2\right)_x+1,\left(p_{k-2}^{2}\right)_y+d_k\right)$ are to the left and to the right of every point in $C_j$, for every $j < k$.  We first consider the case that the first threshold (starting from the root of the decision tree) is along the $x$-axis. Any decision tree that outputs the clustering $C = (C_1, \ldots, C_k)$ must classify $p_k^1$ and $p_k^2$ in the same cluster. The only possible cuts along the $x$-axis that satisfy this property are cuts with threshold larger than the $x$-coordinate of $p_k^2$ or cuts with a threshold lesser than the $x$-coordinate of $p_k^1$. In both these cases, the second level of the decision tree has two nodes, but all the points in the point set get classified into one of them, which would make it trivial. Hence, the first threshold cut must be along the $y$-axis. \par
	
	Now, we consider the case that the first threshold cut is along the $y$-axis. In order to output the clustering $C$, the cut would have to have $y$-coordinate less than $\left(p_{k-1}^1\right)_y$ or greater than $\left(p_{k-1}^2\right)_y$. Otherwise, the cut would classify $p_{k-1}^1$ and $p_{k-1}^2$ into different clusters. If the threshold has $y$-coordinate greater than $\left(p_k^1\right)_y$ or less than $\left(p_{k-1}^1\right)_y$, the cut would be trivial as the second level of the decision tree would have two nodes, but all the points in the point set would get classified into one of them, which would make it trivial. Hence, the threshold lies between $\left(p_k^1\right)_y$ and $\left(p_{k-1}^2\right)_y$. This implies that the  second level of the decision tree contains two nodes, and all the points in the set $C_k$ get classified into one of them. As the decision tree must output the clustering $\Tilde{C}$ on the given point set and is non trivial, that node must be a leaf. All the remaining points in the point set (which is another point set in the same family with parameter $k-1$) go to the other node in the second level, and the inductive hypothesis implies subtree rooted at this node must have depth at least $k-2$, which implies that the decision tree we are considering must have depth $k-1$.
	
	The proof proceeds similarly when $k$ is even, except that the only possible non trivial cuts in the first level would be along the $x$-axis with thresholds between $\left(p_{k}^1\right)_x$ and $\left(p_{k-1}^2\right)_x$.
\end{proof}

This completes the proof of Theorem \ref{thm:2d-impossible}. With the specification of $d,w$, \Cref{lem:size} shows that the point-set can be represented using $|X|^{O(1)}$ bits. The correctness is implied by \Cref{lem:imp-correct}, the price of depth reduction lower bound is shown together by \Cref{lem:point-set-opt} and \Cref{lem:imp-misclassify}.

\section{Lower Bound on Price of Depth Reduction in the Plane for $k$-center}
\label{sec:center}
In this section, we prove a lower bound on the price of depth reduction for the $k$-center objective. Specifically, we construct a point-set $X \subseteq \mathbb{R}^2$, for which the optimal clustering can be explained by a decision tree of depth $k-1$ whose price of explanation is 1. Furthermore, we show that the price of depth reduction is at least $2-\varepsilon$, for any small $\varepsilon>0$.  Formally, we state our result as follows:

\begin{theorem}
	\label{thm:center}
	For every $k\geq 3$ there exists a point-set $X \subset \mathbb{R}^2$  such that the following properties hold  for the $k$-center objective for all $\varepsilon>0$:
	\begin{description}
		\item[Correctness:] There exists a decision tree of depth $k-1$ such that the price of explainability is 1. 
		\item[Price of depth reduction:] For every $h<k-1$, every decision tree of depth $h$ yields a clustering whose cost is at least $(2-\varepsilon)$ times the optimum. In other words, $\Depth(X, k-2) \geq 2-\varepsilon$.
	\end{description}
\end{theorem}

\subsection{Point-Set construction}

The construction below is similar in many aspects to the  point-set constructed in \Cref{sec:plane}. One major difference is that each cluster has 4 points, and we use the notation of $p_i^a$ where $i \in [k], a\in[4]$ to indicate the $a^{\text{th}}$ point in the $i^{\text{th}}$ cluster. Also, the points are not weighted here. Again, as in Section~\ref{sec:plane}, we define the point-set through an underlying clustering $C:=(C_1,\ldots ,C_k)$. 
\begin{tbox}
	\textbf{Point-set $X$ for $k$-center:}
	\begin{itemize}
		\item Initialize the first two clusters $C_1, C_2$ symmetrically across the $y$-axis, each containing four points. In $C_1$, the four points are located at $p_1^1 = (-1,0), p_1^2 = (-1.5,0.5), p_1^3= (-1,1), p_1^4 = (-0.5,0.5)$. In $C_2$, the four points are located at $p_2^1 = (1,0), p_2^2 = (0.5,0.5), p_2^3= (1,1), p_2^4 = (1.5,0.5)$. 
		\item Construct $C_3$ with four points, where $(p_3^a)_x = (p_1^a)_x+1$ and $(p_3^a)_y = (p_1^a)_y+2$ for $a \in [4]$. 
		\item Construct $C_4$ with four points, where $(p_4^a)_x = (p_2^a)_x+2$ and $(p_4^a)_y = (p_2^a)_y+1$ for $a \in [4]$. 
		\item For $i = 5, \ldots, k$, add cluster $C_i$ containing four points iteratively, where $(p_i^a)_x = (p_{i-2}^a)_x+2$ and $(p_i^a)_y = (p_{i-2}^a)_y+2$ for $a \in [4]$.
	\end{itemize}
\end{tbox}

From the construction above, it is easy to see that the $k$-center cost of $X$ is at most $\nicefrac{1}{2}$ (as given by the clustering $C$). Further, it only takes $\tilde{O}(|X|)$ bits to store the whole point-set.

In \Cref{fig-kc}, we have an illustration of point-set $X$. In addition, we have shown the partition of the space using axis parallel lines (coming from a decision tree of depth $k-1$), which recovers the clustering $C$.

\begin{figure}[!h]
	\resizebox{\columnwidth}{!}{\begin{tikzpicture}
			
			\definecolor{color1}{RGB}{245, 224, 157}
			\definecolor{color2}{RGB}{199, 239, 240}
			\definecolor{color3}{RGB}{208, 245, 215}
			\definecolor{color4}{RGB}{210, 212, 197}

			\filldraw[color1,opacity=0.4] (22,-1) rectangle (27, 23);
			\filldraw[color2,opacity=0.4] (-1,18) rectangle (22, 23);
			\filldraw[color3,opacity=0.4] (15,-1) rectangle (19, 15);
			\filldraw[color4,opacity=0.4] (-1,11) rectangle (15, 15);
			\filldraw[cyan,opacity=0.4] (11,-1) rectangle (15, 11);
			\filldraw[pink,opacity=0.4] (-1,7) rectangle (11, 11);
			\filldraw[green,opacity=0.4] (7,-1) rectangle (11, 7);
			\filldraw[cyan!50!red,opacity=0.4] (-1,3) rectangle (7, 7);
			\filldraw[pink!50!orange,opacity=0.4] (3,-1) rectangle (7, 3);
			\filldraw[yellow,opacity=0.4] (-1,-1) rectangle (3, 3);

			\draw[violet, thick, dashed] (1,1) ellipse (1 and 1);
			\filldraw[black] (1,2) circle (3pt) node[red, anchor=north west]{\Large$p_1^3$};
			\filldraw[black] (1,0) circle (3pt) node[red, anchor=north west]{\Large$p_1^1$};
			\filldraw[black] (2,1) circle (3pt) node[red, anchor=north west]{\Large$p_1^4$};
			\filldraw[black] (0,1) circle (3pt) node[red, anchor=north west]{\Large$p_1^2$};
			
			\draw[violet, thick, dashed] (5,1) ellipse (1 and 1);
			\filldraw[black] (5,2) circle (3pt) node[red, anchor=north west]{\Large$p_2^3$};
			\filldraw[black] (5,0) circle (3pt) node[red, anchor=north west]{\Large$p_2^1$};
			\filldraw[black] (4,1) circle (3pt) node[red, anchor=north west]{\Large$p_2^2$};
			\filldraw[black] (6,1) circle (3pt) node[red, anchor=north west]{\Large$p_2^4$};
			
			\draw[violet, thick, dashed] (3,5) ellipse (1 and 1);
			\filldraw[black] (2,5) circle (3pt);
			\filldraw[black] (4,5) circle (3pt);
			\filldraw[black] (3,6) circle (3pt);
			\filldraw[black] (3,4) circle (3pt);
			
			\draw[violet, thick, dashed] (9,3) ellipse (1 and 1);
			\filldraw[black] (8,3) circle (3pt);
			\filldraw[black] (10,3) circle (3pt);
			\filldraw[black] (9,2) circle (3pt);
			\filldraw[black] (9,4) circle (3pt);
			
			\draw[violet, thick, dashed] (7,9) ellipse (1 and 1);
			\filldraw[black] (7,10) circle (3pt);
			\filldraw[black] (7,8) circle (3pt);
			\filldraw[black] (8,9) circle (3pt);
			\filldraw[black] (6,9) circle (3pt);
			
			\draw[violet, thick, dashed] (13,7) ellipse (1 and 1);
			\filldraw[black] (12,7) circle (3pt);
			\filldraw[black] (14,7) circle (3pt);
			\filldraw[black] (13,6) circle (3pt);
			\filldraw[black] (13,8) circle (3pt);
			
			\draw[violet, thick, dashed] (11,13) ellipse (1 and 1);
			\filldraw[black] (11,14) circle (3pt);
			\filldraw[black] (11,12) circle (3pt);
			\filldraw[black] (12,13) circle (3pt);
			\filldraw[black] (10,13) circle (3pt);
			
			\draw[violet, thick, dashed] (17,11) ellipse (1 and 1);
			\filldraw[black] (17,12) circle (3pt);
			\filldraw[black] (17,10) circle (3pt);
			\filldraw[black] (16,11) circle (3pt);
			\filldraw[black] (18,11) circle (3pt);
			
			\draw[brown,line width=4pt, dotted](19.3,15.3) -- (21.7,17.7);
			
			\draw[violet, thick, dashed] (19,21) ellipse (1 and 1);
			\filldraw[black] (19,20) circle (3pt);
			\filldraw[black] (19,22) circle (3pt);
			\filldraw[black] (18,21) circle (3pt);
			\filldraw[black] (20,21) circle (3pt);
			
			\draw[violet, thick, dashed] (25,19) ellipse (1 and 1);
			\filldraw[black] (25,18) circle (3pt);
			\filldraw[black] (25,20) circle (3pt);
			\filldraw[black] (24,19) circle (3pt);
			\filldraw[black] (26,19) circle (3pt);

	\end{tikzpicture}}
	
	\caption{Illustration of the point-set $X$ with the explanation from a decision tree of depth $k-1$.}
	\label{fig-kc}
\end{figure}
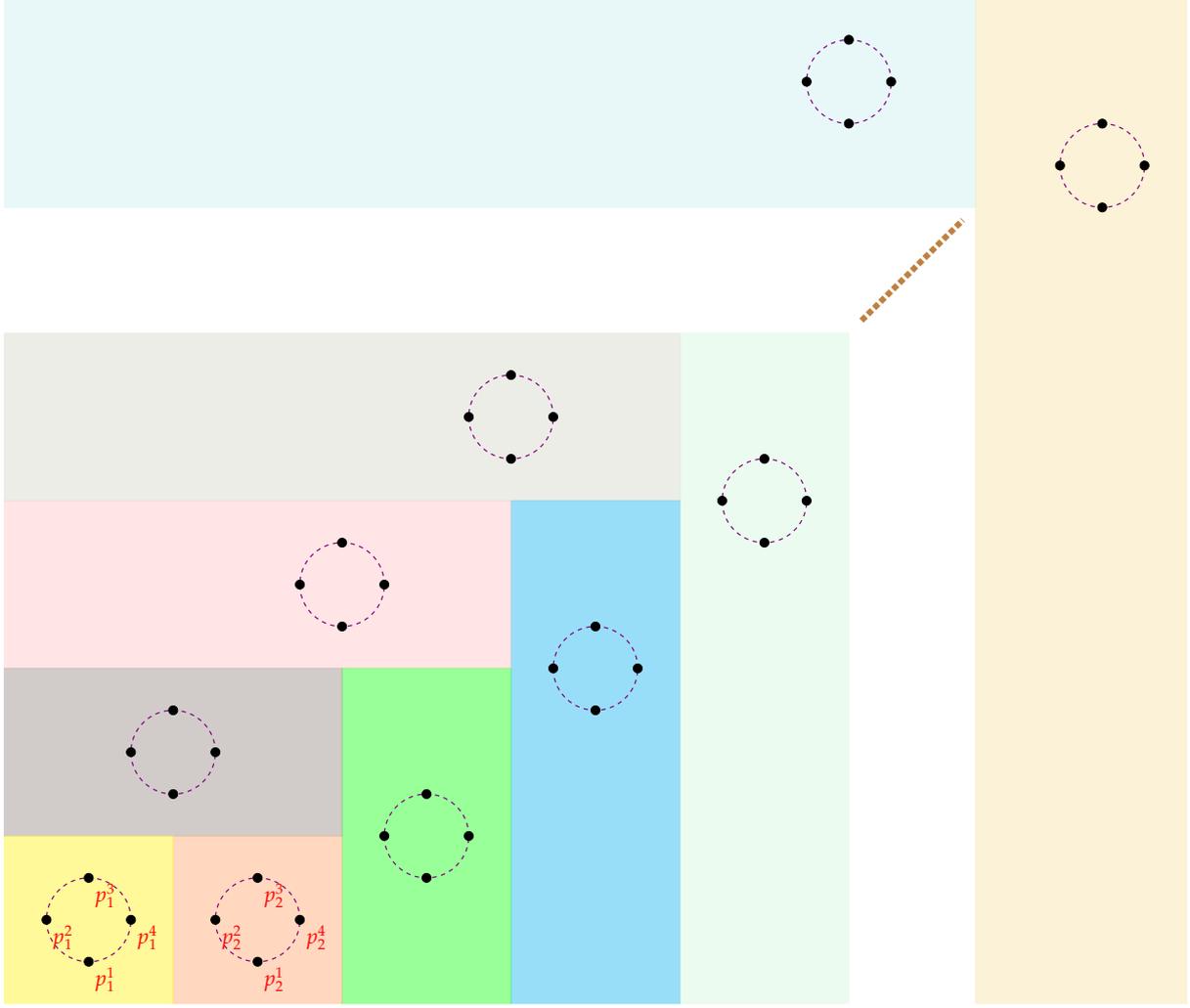

\subsection{Proof of Theorem \ref{thm:center}}

We start with the correctness proof in \Cref{lem:center-correctness}. The price of depth reduction can be concluded by \Cref{lem:center-depth-reduction} with \Cref{lem:center-price}.
\begin{lemma}
	\label{lem:center-correctness}
	There exists a decision tree of depth $k-1$ which outputs the optimal clustering for point-set $X$ w.r.t. the $k$-center objective.
\end{lemma}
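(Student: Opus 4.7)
The plan is to construct the decision tree recursively by peeling off one cluster at a time, starting from the top of the staircase $(C_1,\ldots,C_k)$ and working down. At each internal node one child is a leaf (the peeled-off cluster) and the other child is the root of a subtree handling the remaining $k-1$ clusters. Since the tree has $k$ leaves and we peel one cluster at each of the first $k-2$ levels and split the last two clusters by a single cut, the depth is exactly $k-1$.

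The key geometric claim I would establish first is: for every $j\geq 3$, the cluster $C_j$ can be separated from $C_1\cup\cdots\cup C_{j-1}$ by a single axis-parallel line, namely a \emph{vertical} line if $j$ is even and a \emph{horizontal} line if $j$ is odd. This follows by unrolling the recurrences $(p_i^a)_x=(p_{i-2}^a)_x+2$, $(p_i^a)_y=(p_{i-2}^a)_y+2$ (together with the asymmetric base shifts $+1$ that make the even-indexed staircase ``lead'' in $x$ and the odd-indexed staircase ``lead'' in $y$). Concretely, for even $j$ one verifies that the minimum $x$-coordinate of $C_j$ strictly exceeds the maximum $x$-coordinate of every point in $C_1\cup\cdots\cup C_{j-1}$, so that any threshold $\theta$ strictly between these two values defines an admissible cut; the odd case is symmetric in $y$. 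This is essentially bookkeeping on the cumulative shifts and is where most of the actual work sits.

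Given the separation claim, the tree is built as follows. Let $T_k$ be the tree for the full point-set. Its root applies the cut guaranteed by the claim to separate $C_k$, sending $C_k$ to a leaf and the remainder to the root of $T_{k-1}$, where $T_{k-1}$ is obtained inductively on the subset $C_1\cup\cdots\cup C_{k-1}$, which by construction is exactly the $k$-center instance with $k-1$ clusters in the same family. The base case $k=2$ uses the reflectional symmetry across the $y$-axis built into $C_1,C_2$: the cut $x=0$ separates them with a depth-$1$ tree. By induction $\text{depth}(T_k)=1+\text{depth}(T_{k-1})=k-1$, and the leaves of $T_k$ realize exactly the partition $(C_1,\ldots,C_k)$. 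Since each $C_i$ consists of four points inscribed on a circle of radius $\tfrac{1}{2}$ centered at the midpoint (so the $k$-center cost of the partition $C$ is $\tfrac{1}{2}$, which is optimal because any single center covers at most one such diamond within radius $<1/2$), the tree achieves optimum, i.e.\ price of explainability $1$.

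The step I expect to be the main obstacle is verifying the separation claim cleanly for both parities of $j$: one must confirm that the $+1$ shift used at $i=3$ (versus $i=4$) and the base asymmetry of $C_1,C_2$ across the $y$-axis combine so that the peeling alternates between vertical and horizontal cuts without ever failing. Once these coordinate inequalities are written out and the induction set up, the rest of the argument (tree depth, correctness of classification, matching the optimal $k$-center cost) follows immediately.
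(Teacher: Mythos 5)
Your proposal is correct and takes essentially the same route as the paper: an onion-peeling (``totally skewed'') decision tree that separates $C_k, C_{k-1},\ldots,C_2$ one at a time, alternating vertical cuts for even indices and horizontal cuts for odd indices, yielding a path-shaped tree of depth $k-1$ that realizes the clustering $C$. The only difference is cosmetic --- the paper writes down explicit thresholds ($x=i-1/2$, resp.\ $y=i-1/2$) while you take any threshold strictly between the maximum coordinate of the earlier clusters and the minimum coordinate of $C_j$, which is if anything the more robust formulation of the same cut.
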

\begin{proof}
	Consider the following sequence of  $k-1$ threshold cuts. For $i=k,k-1,\ldots,2$:
	\begin{itemize}
		\item If $i$ is even, create a cut along $x=i-3/2$.
		\item If $i$ is odd, create a cut along $y=i-3/2$.
	\end{itemize} 
	We can see that the first cut partitions the first cluster away from the rest of the point set entirely. This applies to all the other cuts as well. Hence, the above sequence transforms to a decision tree of depth $k-1$ with a leaf in every level and outputs an explainable clustering identical to the optimal clustering.
\end{proof}
\begin{lemma}
	\label{lem:center-depth-reduction}
	Every decision tree of depth less than $k-1$ outputs a clustering which misclassifies (w.r.t. the optimal clustering $C$) at least one point.
\end{lemma}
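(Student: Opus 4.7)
I would prove this by induction on $k$, closely mirroring the proof of \Cref{lem:imp-misclassify}. The base case is $k = 3$: a decision tree of depth at most $k - 2 = 1$ has at most $2$ leaves and therefore cannot realize a partition of $X$ into three non-empty clusters, so it must misclassify.

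For the inductive step, the key structural observation is the following claim: \emph{the only non-trivial axis-parallel cut of $X$ that does not split any cluster $C_i$ is one that peels off the entire cluster $C_k$ from the rest.} To justify this, I would examine the $x$- and $y$-projections of the clusters. By the construction, the projections onto the $x$-axis of the clusters, in the order $C_1, C_3, C_2, C_5, C_4, C_7, C_6, \ldots$, tile an interval with consecutive projections sharing their endpoints, and analogously for the $y$-axis projections of $\{C_1, C_2\}, C_4, C_3, C_6, C_5, \ldots$. Any cut strictly inside a cluster's projection splits that cluster, while a cut at a shared endpoint between two adjacent clusters' projections puts the sole extremal point of one of them on the opposite side of the sole extremal point of the other, under either the $<$ or $\leq$ convention, again splitting one of the two. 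The single exception is the gap created by the last-added cluster $C_k$, namely a horizontal gap if $k$ is odd and a vertical gap if $k$ is even, and any cut lying in this gap peels off exactly $C_k$.

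Given the structural claim, the induction concludes as follows. Suppose for contradiction that $\mathcal{T}$ is a decision tree of depth at most $k-2$ whose output clustering equals $C$. We may assume every node of $\mathcal{T}$ is non-trivial (contracting trivial nodes only decreases depth). Since the output equals $C$, the root cut cannot split any cluster; being non-trivial it must therefore peel off $C_k$ by the structural claim. One child of the root is then a leaf representing $C_k$, while the other subtree has depth at most $k-3$ and must output the clustering $(C_1, \ldots, C_{k-1})$ on $X' := X \setminus C_k$. But $X'$ is precisely the point-set produced by the same construction for parameter $k-1$, so the inductive hypothesis forces this subtree to have depth at least $k-2$, contradicting the bound $k-3$.

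I expect the main obstacle to be the structural claim, specifically verifying that the $x$- and $y$-projections of the clusters tile without gaps save for the one created by $C_k$. This relies on the explicit arithmetic of the construction (the asymmetric unit shifts building $C_3$ and $C_4$, followed by the $(+2, +2)$ shifts for $i \geq 5$), and on a careful case analysis of cuts at shared boundary coordinates, where one must confirm that each such coordinate is realized by exactly one point of each of the two adjacent clusters, so that both the strict and non-strict conventions for the threshold split some cluster.
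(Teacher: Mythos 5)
Your proposal is correct and takes essentially the same route as the paper, which only sketches the argument: an induction on $k$ with base case $k=3$, after establishing that the only non-trivial root cut consistent with the clustering $C$ is one that peels off $C_k$, leaving the instance for parameter $k-1$. Your projection-tiling verification of that structural claim (including the boundary-threshold cases) simply fills in the details the paper defers to the analogy with \Cref{lem:imp-misclassify}.
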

\begin{proof}
	The proof of this lemma is similar to the proof of Lemma \ref{lem:imp-misclassify}. We prove the base case for $3$ clusters and show that the only possible cuts in the first level of the decision tree are those which separate out the cluster $C_k$ from the remaining points. We can then use induction on $k$ to show that any decision tree that does not misclassify any point has depth $k-1$
\end{proof}
\begin{lemma}
	\label{lem:center-price}
	Consider any other choice of explainable clustering $\Tilde{C} \neq C$. The $k$-center objective of $\Tilde{C}$ is at least $1$.
\end{lemma}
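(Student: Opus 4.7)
The plan is to prove the contrapositive: if every leaf of the decision tree realizing $\tilde C$ has smallest-enclosing-circle radius strictly less than $1$, then $\tilde C = C$. The cost hypothesis implies that every leaf has diameter strictly less than $2$, so no two points at Euclidean distance at least $2$ can share a leaf.

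For each cluster $C_i$ with $i\neq 2$ I designate a \emph{safe point} $s_i\in C_i$: take $s_1=p_1^2$, $s_3=p_3^3$, $s_i=p_i^3$ for odd $i\ge 5$, and $s_i=p_i^4$ for even $i\ge 4$ (the ``outward'' corner of each diamond along the staircase). Using the coordinates from \Cref{sec:center}, a direct calculation verifies that every point of $X\setminus C_i$ is at Euclidean distance at least $2$ from $s_i$: for small $i$ this is a finite case check (e.g.\ the unique non-$C_1$ point at distance exactly $2$ from $s_1$ is $p_2^2$, and all others are strictly farther), while for $i\ge 5$ consecutive staircase centers differ by $(2,2)$, giving a gap of at least $\sqrt{8}>2$ to the nearest non-$C_i$ point. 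Let $L_i$ denote the leaf containing $s_i$. Any other point of $L_i$ lies within distance strictly less than $2$ of $s_i$ by the diameter bound, so the safety property forces $L_i\subseteq C_i$. The leaves $L_i$ are pairwise distinct, since $L_i=L_j$ with $i\neq j$ would have to be a subset of both $C_i$ and $C_j$, impossible because it contains $s_j\in C_j\setminus C_i$.

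The $k-1$ safe-point leaves exhaust all but one of the $k$ leaves; call the last one $L_2$. No point of $C_2$ can lie in a pure $L_i$ with $i\neq 2$, so $C_2\subseteq L_2$. Conversely, for every $p\in X\setminus C_2$ I exhibit a $q\in C_2$ with $\dist(p,q)\ge 2$: take $q=p_2^4$ when $p\in C_1$, and $q=p_2^1$ when $p\in C_3\cup C_4\cup\cdots\cup C_k$ (again verified by a direct coordinate calculation). Including such a $p$ in $L_2$ would force its diameter to reach $2$, contradicting the cost hypothesis, so $L_2\subseteq C_2$ and hence $L_2=C_2$. A counting step---$\sum_i|L_i|=|X|=4k$ with $L_i\subseteq C_i$ and $|C_i|=4$---then forces $L_i=C_i$ for every $i$, giving $\tilde C=C$ and the desired contradiction.

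The main obstacle is the asymmetric role played by $C_2$: once $k\ge 4$ no point of $C_2$ is safe, since each $p_2^j$ has some close neighbor in $C_1$, $C_3$, or $C_4$ at distance strictly less than $2$ (for instance $\dist(p_2^4,p_4^2)=\sqrt{2}$). Consequently $C_2$ cannot be recovered via a safe point and must be identified as the leftover leaf via the reverse distance check described above. The remainder of the argument is a careful but routine distance-bookkeeping exercise on the explicit staircase coordinates.
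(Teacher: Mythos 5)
Your proof is correct, and the distance bookkeeping checks out on the explicit coordinates: $s_1=p_1^2=(-1.5,0.5)$ has $\dist(s_1,p_2^2)=2$ with every other non-$C_1$ point strictly farther; $s_3=p_3^3$ and $s_4=p_4^4$ are at distance at least $\sqrt{4.5}>2$ from everything outside their clusters; for $i\ge 5$ every point of $C_i$ (so in particular $s_i$) is at distance at least $\sqrt{4.5}$ from every point outside $C_i$; and the reverse checks $\dist(p,p_2^4)\ge 2$ for $p\in C_1$ and $\dist(p,p_2^1)\ge 2$ for $p\in C_3\cup\cdots\cup C_k$ hold, so the safe-point / leftover-leaf / counting logic goes through (using, as the paper does, that $\tilde C$ consists of exactly the $k$ leaves). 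One minor numerical slip: for $i\ge 5$ the nearest non-$C_i$ point to $s_i$ is at distance $\sqrt{4.5}\approx 2.12$, not $\sqrt{8}$ --- the latter is the center-to-center gap, from which the diamonds' diagonal extent must be subtracted --- but since $\sqrt{4.5}>2$ the conclusion is unaffected. Your route is organized differently from the paper's: the paper first notes that the clusters $C_i$, $i\ge 5$, are $2$-separated from everything else, then splits into two cases according to whether $\tilde C$ breaks one of them (ruled out by pigeonhole on the four pairwise $2$-separated points $p_1^1,p_2^1,p_3^1,p_4^1$) or keeps them all intact, and in the latter case sequentially forces $C_4$, then $C_3$, then $C_1$ and $C_2$ to be preserved. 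Your uniform safe-point argument avoids that case analysis, pins $k-1$ leaves inside their clusters in one stroke, recovers $C_2$ as the leftover leaf via the reverse distance check (needed precisely because, as you correctly observe, $C_2$ admits no safe point once $k\ge 4$), and closes by counting; both arguments ultimately rest on the same separation fact that cost below $1$ forbids two points at distance at least $2$ from sharing a leaf, but yours trades the paper's sequential peeling for a slightly cleaner global bookkeeping.
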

\begin{proof}
	Consider any clustering $\Tilde{C} \neq C$ with $k$-center cost less than $1$. Firstly, for all $i \geq 5$, the distance between points in the cluster $C_i$ and points in any other cluster is at least $2$. Hence, if any cluster contains a point from a cluster $C_i$, where $i \geq 5$ and a point from a cluster $C_j$, where $j \neq i$, the radius of the circle containing both these points is at least $2$, means that the center of the cluster these two points are assigned to must be distance at least $1$ from one of them, which implies that the $k$-center cost of this clustering must be at least $1$. Hence, the clustering $\Tilde{C}$ must either split some of these clusters into multiple clusters, or leave them intact. 
	
	Let us first consider the case that $\Tilde{C}$ splits at least one of these clusters. As there are only $k$ clusters in total, that means that the points in the first $4$ clusters have to be divided into at most $3$ clusters. However, consider the points $p_1^1, p_2^1,p_3^1, p_4^1$. The pairwise distances between each of these points is at least $2$, and hence, for the $k$-center cost to be less than $1$, they all have to be in different clusters which is not possible as there are only $3$ clusters to divide them into. 
	
	We now consider the case that $\Tilde{C}$ leaves all the clusters $C_i$ for $i \geq 5$ intact. In this case, it has to induce a different clustering among the points in the first $4$ clusters. As in the previous case, the points $p_1^1, p_2^1,p_3^1, p_4^1$ have to be in separate clusters because their pariwise distances are larger than $2$.
	
	Consider the point $p_4^2, p_4^3$ and $p_4^4$. They are at a distance more than $2$ from every one of these point apart from $p_4^1$, and hence have to be assigned to the cluster containing $p_4^1$. Note that the point $p_4^4$ is at a distance greater than $2$ from every point outside $C_4$. This implies that $\Tilde{C}$ must preserve the cluster $C_4$.
	
	Next we prove that $\Tilde{C}$ preserves the cluster $C_3$. The point $p_3^3$ has to be assigned to the cluster containing $p_3^1$, and it is at a distance of at least $2$ from every point outside $C_3$. This also implies that the cluster containing $p_3^1$ can only contain points from $C_3$. Note that the points $p_3^2$ and $p_3^4$ are at distances greater than $2$ from $p_1^1$ and $p_2^1$ which implies that $\Tilde{C}$ must preserve the cluster $C_3$. 
	
	We can now show that the points $p_1^2$ and $p_1^3$ cannot be assigned to the cluster containing $p_2^1$ and hence must be assigned to the cluster containing $p_1^1$.  This implies that $p_2^4$, $p_2^3$ and $p_2^2$ must be assigned to the cluster containing $p_2^1$. Finally this also implies that $p_1^4$ is assigned to the cluster contiaining $p_1^1$. Hence, $\Tilde{C}$ must preserve the clusters $C_1$ and $C_2$ as well.
\end{proof}

\section{Discussion and Open Problems}\label{sec:conclusion}

In this paper, we introduced the notion of price of depth reduction, and showed that even in the plane, there is a dataset for which it is unbounded for the $k$-means and $k$-median objectives (and a weaker result for the $k$-center objective). 

\subsection{Other Metric Spaces}\label{sec:metric}
\label{subsec:metric}
It is easy to see that the lower bound in Section 3 of \cite{moshkovitz2020explainable} for high dimensions   extends to other $\ell_p$-metrics. Moreover, even in $\mathbb{R}^2$, it is possible to extend Theorem~\ref{thm:2d-impossible} to other 
$\ell_p$-metrics by choosing the sequences $\{w_i\}_{i=1}^k$ and $\{h_i\}_{i=1}^k$ to be suited  to the $\ell_p$-metric space (i.e., the value of $p$). 

The above remark is particularly relevant as Dasgupta et al.\ \cite{moshkovitz2020explainable}   introduced explainable clustering for the $k$-median objective in the $\ell_1$-metric, as it is easier to analyze the $k$-median objective in that metric space. Explainable clustering for the $k$-median objective in the $\ell_2$-metric was only studied in the later work of \cite{MakarychevS21}. However, as stated above, our lower bounds on the price of depth reduction can be extended to the $k$-median objective in the $\ell_1$-metric as well. 

\subsection{Implications of Our Lower Bound Construction}
One may wonder if our lower bound construction is too pathological. To address this concern, we now informally argue that our impossibility result gives a characterization for a non-negligible class of datasets w.r.t.\ its optimal decision tree for explainable clustering.  

Call a rooted decision tree ``totally skewed'' if it's depth is 1 less than the number of leaves in the tree. We note that in a typical decision tree of large depth (say of depth linear in the number of leaves), there is a large induced connected subgraph which is a totally skewed subtree. 

We also note that if the optimal explanation for a dataset is a totally skewed tree then the dataset is "isomorphic" to our seemingly pathological dataset given in \Cref{sec:plane}. To see this, consider all the cuts except the first cut. If the last (closest to the root) cluster didn’t intersect the second cut, we can use all the cuts except the first and add an additional cut somewhere else in the subtree to separate the last cluster, thus getting a smaller depth tree. So, the second cut intersects the last cluster. This means that the last cluster has to block the gap between some previous clusters. We can repeat this argument to show that the clusters are added in layers blocking the space between previous clusters.



Putting these two observations together, if we only have access to the optimal explanation of a dataset, then we know that there is  a set of coordinates (the ones appearing in the nodes of the subtree which is totally skewed), 
under whose projection, there is a subset of the input points which looks similar (after scaling and rotation) to the point-set in Section~\ref{sec:plane}.

\subsection{Open Problems}
Finally, we dedicate the rest of the section to highlight a couple of open problems. 

\paragraph{Open Problem 1.} Our  dataset construction for the price of depth reduction for the $k$-center objective in the plane, only gave a lower bound of (slightly less than) two. We believe that it might not be possible to improve our lower bound any further and leave it as  an important open problem to find an  algorithm (if it exists) that provides a shallow decision tree explanation for all datasets in the plane and whose price of depth reduction is upper bounded by 2 (or at least by some constant independent of $k$).

\paragraph{Open Problem 2.} We think that an intriguing direction for future research is to explore bicriteria approximation of price of depth reduction. 
Elaborating, we wonder  if it is possible to obtain a tradeoff between the number of leaves (clusters) in the decision tree and the price of depth reduction. This suggested research direction is a natural extension of  \cite{makarychev2022explainable}, where the authors showed a near optimal tradeoff between the number of leaves in the decision tree and the price of explanation.

Additionally, we think it might be possible that the algorithm given in \cite{esfandiari2022almost} may yield a low depth decision tree (with close to the optimal clustering cost), if the dataset is in \emph{fixed dimensions} and admits a low depth decision tree (with optimal clustering cost). We leave this exploration for future research.

 \subsubsection*{Acknowledgements}
 We thank Vincent Cohen-Addad for some preliminary discussions. 
\bibliographystyle{alpha}
\bibliography{reference}

\end{document}